%% file: main.tex
\documentclass[11pt]{article}

\usepackage[]{acl}
\usepackage{times}
\usepackage{latexsym}
\usepackage[T1]{fontenc}
\usepackage[utf8]{inputenc}
\usepackage{microtype}
\usepackage{inconsolata}
\usepackage{booktabs}
\usepackage{amsmath,amssymb,amsthm,amsfonts}
\usepackage{mathtools}
\usepackage{verbatim}
\usepackage{graphicx}
\usepackage{multirow}
\usepackage{enumitem}
\usepackage{xspace}
\usepackage{hyperref}
\usepackage{cleveref}
\usepackage{xcolor}
\newtheorem{proposition}{Proposition}
\usepackage{overpic}
\usepackage{fontawesome5}

\newcommand{\cC}{\mathcal C}
\newcommand{\cE}{\mathcal E}
\newcommand{\cF}{\mathcal F}
\newcommand{\cG}{\mathcal G}
\newcommand{\cT}{\mathcal T}
\newcommand{\cV}{\mathcal V}

\newcommand{\bH}{H}

\newcommand{\bh}{h}
\newcommand{\bu}{u}

\newcommand{\bbN}{\mathbb{N}}

\newcommand{\stitle}[1]{\vspace{0.33ex}\noindent{\textbf{#1}}}

\newcommand{\method}{\textbf{GRAB}}

\title{Latent Bridges for Multi-Table Question Answering}

\author{
Simone Varriale\textsuperscript{1} \quad
Tamara Cucumides\textsuperscript{2} \quad
Floris Geerts\textsuperscript{2} \quad
Paolo Papotti\textsuperscript{1}
\\
\textsuperscript{1}EURECOM \quad
\textsuperscript{2}University of Antwerp
}

\begin{document}
\maketitle

\begin{abstract}
We introduce \textbf{GRAB}, a con\-struc\-tor–en\-co\-der–bridge pipeline for table question answering. Our method lifts relational data into an heterogeneous graph, encodes it via message passing, and transfers the  signals to an LLM through a small set of query-conditioned latent tokens. This provides the LLM with a compact, task-relevant structural representation together with the flattened text.
Crucially, the LLM remains strictly frozen to preserve its general reasoning capabilities; we train only the lightweight graph encoder and latent bridge (91M parameters), allowing the entire pipeline to be trained efficiently. Our pipeline significantly improves performance on relational Question Answering, with the largest gains in demanding multi-table settings, offering an efficient, principled way to connect relational deep learning with LLMs.
{\small \faGithub~\href{https://anonymous.4open.science/r/Graph-Relational-Attention-Bridge}{Link to Code}}
\end{abstract}

\section{Introduction}

Table question answering (TQA) asks language models to answer natural language (NL) questions grounded in structured data.
While Text-to-SQL is popular for querying  databases, TQA is essential because SQL struggles with messy, unnormalized data, implicit relationships, or hybrid contexts where tables are mixed with free text \cite{badaro23}. 
Most LLM-based approaches treat tables as text: they serialize rows and columns into a 1D sequence and rely on the model to recover the underlying structure. This strategy is convenient, but it mismatches the semantics of tables, where row--column organization, permutation invariance, hierarchical headers, and cross-cell dependencies are central to meaning. 
This loss of structure is a key reason why LLMs remain brittle on table reasoning \cite{tamo}.

A natural alternative is to treat tables as a separate modality and interface them with LLMs through learned representations rather than raw serialization alone. In this work, we encode tables with a dedicated neural network and inject the resulting features into an LLM as latent tokens. 
Our starting point is that many of the difficulties of TQA are inherently relational: relevant evidence is distributed across rows, columns, and value groups, and in the multi-table setting the model must reason across linked tables through join dependencies.

We therefore propose \textbf{GRAB} (\textbf{G}raph-\textbf{R}elational \textbf{A}ttention \textbf{B}ridge), a GNN-based table encoder for LLM-based \emph{multi-table} question answering. As shown in Figure \ref{fig:architecture}, our encoder lifts relational data into a graph, uses message passing to capture structural dependencies, and compresses the result into a small set of latent tokens consumed by the LLM. The representation is conditioned on the NL question, 
allowing the encoder to produce question-relevant structure rather than a single static summary of the input table. 

Adapting LLMs to tabular data typically requires computationally expensive full fine-tuning or parameter-heavy adapters, e.g., LoRA \cite{HuSWALWWC22}. While LoRA freezes the pretrained backbone and learns task-specific low-rank updates, it alters the model's internal representations. This can cause the adapted inference behavior to over-specialize to the fine-tuning domain, potentially reducing out-of-distribution cross-task performance and causing catastrophic forgetting \cite{huang2024lorahub}. In contrast, our approach keeps the LLM strictly frozen and isolates the task-specific learning entirely within our 91M parameter external module.
This lightweight design allows the entire pipeline to be trained efficiently on a single GPU, democratizing multi-table reasoning.

\begin{figure*}[t] 
    \centering



\includegraphics[width=0.95\textwidth]{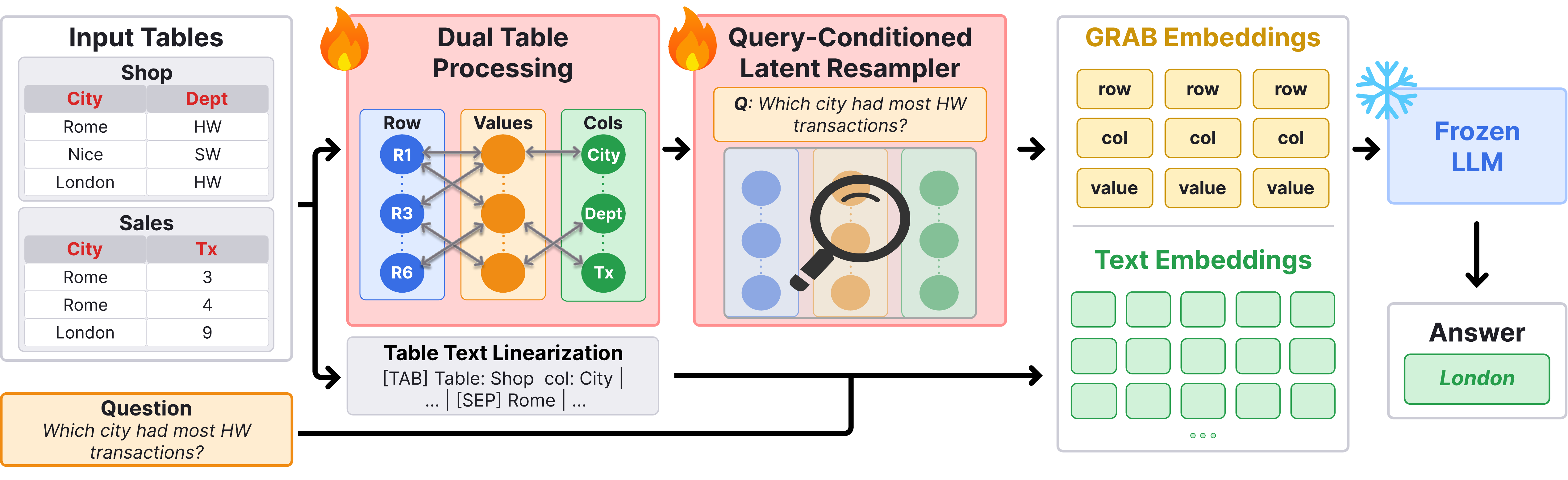}
\caption{\textbf{Architecture Overview.} Tables are processed via two parallel streams: text serialization and a tripartite graph that explicitly captures multi-table joins. A \textit{Query-Conditioned Latent Resampler} uses the natural language question to actively filter the GNN-encoded graph into dynamic soft tokens. These structural tokens guide a frozen LLM to generate the final answer. Only the lightweight graph and resampler modules are updated during training.}
    \label{fig:architecture}
\end{figure*}

Our central claim is that graph-conditioned latent interfaces provide a practical middle ground between two extremes: pure text serialization, which underuses relational structure, and symbolic pipelines, which often sacrifice the flexibility of LLMs on underspecified or compositional questions. By combining a relational graph encoder with a lightweight latent bridge, we preserve structural bias while keeping the downstream model fully compatible with autoregressive LLM reasoning. Empirically, we show that this design is especially effective on multi-table and structurally demanding questions. Conceptually, our results confirm that, for TQA, tables should not be treated merely as text, but as a structured modality that requires its own encoder and interface to the LLM.

In summary, our main contributions are threefold. First, we introduce \textbf{GRAB}'s architecture (Section \ref{sec:method}) and formalize it (Section \ref{sec:theory}). Second, we design a stress-test taxonomy  that isolates structural evidence localization from exact arithmetic, providing a novel diagnostic tool for LLMs 
(Section \ref{sec:expSetup}).
Finally, we show that \textbf{GRAB} consistently outperforms serialization-only 
across 13 single- and multi-table QA benchmarks (Section \ref{sec:expResult}). 

\section{Problem Setting}
\label{sec:problem_setting}
We formulate multi-table question answering (TQA) as a conditional generation task over a relational database\footnote{The proposed solution works also for other tasks, such as tabular fact checking. We report these results in Appendix \ref{app_factcheck}.}. Let $\mathcal{T} = \{T_1, T_2, \dots, T_n\}$ be a set of tables, where each table $T_i$ consists of a set of columns $C_i$, rows $R_i$, and cell values $V_i$. The database is accompanied by metadata $\mathcal{M}$, which defines the schema, including a foreign-key (FK) relationships ($\mathcal F$) linking tables. Columns have a type $\tau\in\{\mathrm{cat(egorical)},\mathrm{num(erical)},\mathrm{text}\}$.

Given a natural language question $Q$ and the relational context $(\mathcal{T}, \mathcal{M})$, the goal is to generate a target answer $A$, which may be a free-form generative text, an extractive span, or a numeric aggregate. 

In the standard setting, an LLM is either used in a zero/few-shot setting (i.e., \textit{frozen}) or trained to maximize the probability of the correct answer $P(A \mid Q, \mathcal{T}, \mathcal{M})$. Typically, the input tables are flattened into a text sequence $S_{\text{\sl table}}$, and the LLM implicitly reconstructs row-column alignments and cross-table linkages \cite{badaro23}. 
However, tabular data exhibits unique semantic properties, such as permutation invariance of rows and strict hierarchical header structures, that are distorted or even lost by serialization.

\section{Related Work}
\textbf{LLMs for Table Question Answering.}
Early approaches to TQA rely on encoder-only models pretrained on flattened tabular data \cite{tapas,tapex}. 
Other models mitigate the loss of structure caused by serialization with specialized attention biases to capture row-column alignments and preserve permutation invariance \cite{tableformer}, but injecting these structural biases requires retraining of the LLM. Indeed, with LLMs, the paradigm shifted to text serialization \cite{unifiedskg}, where tables are converted into (Markdown or HTML) sequences and processed via standard autoregressive generation \cite{tablellama}. However, current pure-LLM TQA strategies suffer context-window fragmentation when serializing multiple tables, losing structural coherence \cite{griqa,chen-etal-2024}. 

Recent models such as TAMO \cite{tamo} inject hypergraph-encoded tables as soft tokens. TAMO constructs a hypergraph over cell occurrences: each cell is a primitive node, while rows, columns, and the whole table act as hyperedges. 
In contrast, \method{} canonicalizes repeated values within column classes and merges foreign-key-linked column occurrences into shared classes. Equality patterns and join keys therefore become explicit graph connectivity rather than implicit textual or embedding-level coincidences. Moreover, TAMO's encoding is query-agnostic, whereas our latent bridge is conditioned on the NL question. 

\stitle{Graph Learning for Relational Data.}
Tabular ML is dominated by tree-based models \cite{xgboost} and multilayer perceptrons, which treat rows as independent, identically distributed samples. Tabular foundation models \cite{tabpfn,trl25} introduce cross-row attention but are designed for row-level predictions. For TQA, we argue that relational databases are too dense to be losslessly compressed as static tokens. 

Conversely, relational deep learning \cite{relbench} explicitly models multi-table databases as heterogeneous graphs, using message passing to capture foreign key links. While these models excel at node classification over databases, they are rarely integrated into LLM pipelines for TQA. Our work bridges this gap by using graph constructors to encode tabular data before reasoning.

\stitle{
Soft Tokens \&
Multimodality.}
Parameter-efficient fine-tuning methods \cite{prefix,liu22} introduced ``soft tokens'': continuous, learnable prompt vectors that steer frozen LLMs without updating their weights. This paradigm has been adapted for multimodal bridging, where models use latent resamplers to compress continuous signals from vision/audio encoders into few soft prefix tokens \cite{flamingo, blip2}. 
We are the first to use query-conditioned latent resamplers to compress \textit{relational structures}. Rather than acting as a generic summary, our latents act as a learned structural retrieval bridge (see Appendix~\ref{app_method-comparison}, Table \ref{tab_method-comparison} for a feature comparison).

\section{Method}
\label{sec:method}

\input{sec4-short}

\stitle{Graph Encoder.}
Given the initialized node representations
$H_R^{(0)}$, $H_C^{(0)}$, and $H_V^{(0)}$, the graph encoder applies message passing over the tripartite incidence structure of $\mathcal{G}$. The encoder maintains separate hidden states for row, column, and value nodes, and updates them through the row-value and column-value adjacency matrices. 
Let $A_{RV} \in \{0,1\}^{|\cV_R| \times |\cV_V|}$ denote the row-value incidence matrix, where $(A_{RV})_{ig}=1$ if row node $r_i$ is
connected to value node $v_g$. Similarly, let
$A_{CV} \in \{0,1\}^{|\cV_C| \times |\cV_V|}$ denote the column-value incidence matrix, where $(A_{CV})_{jg}=1$ if column node $c_j$ is
connected to value node $v_g$.

At layer $\ell$, value nodes aggregate messages from their
incident row and column nodes, while row and column nodes aggregate messages
from their incident value nodes:
\[
M_V^{(\ell)} =
A_{RV}^{\top} H_R^{(\ell)}
+
A_{CV}^{\top} H_C^{(\ell)},
\]
\[
M_R^{(\ell)} = A_{RV} H_V^{(\ell)}, \qquad
M_C^{(\ell)} = A_{CV} H_V^{(\ell)}.
\]

The row, column, and value states are then updated in parallel using type-specific residual blocks. For each node type $X$, where $X$ may denote
rows $R$, columns $C$, or values $V$, the update is:
\[
\begin{aligned}
\tilde H_X^\ell
  &= H_X^\ell + \mathrm{Drop}\!\left(N_X^{\mathrm{msg}}(M_X^\ell)\right),\\
H_X^{\ell+1}
  &= \tilde H_X^\ell
   + \mathrm{Drop}\!\left(F_X\!\left(N_X^{\mathrm{ffn}}(\tilde H_X^\ell)\right)\right),
\end{aligned}
\]
where $\mathrm{Drop}$, $N_X^{\mathrm{msg}}$, $N_X^{\mathrm{ffn}}$, and $F_X$ denote dropout, message normalization, FFN normalization, and the feed-forward network, respectively.
    The update follows a Transformer-style residual design: the first residual branch injects aggregated graph messages, while the second residual branch applies a type-specific feed-forward refinement to the message-updated node state. Each node type has separate normalization and feed-forward parameters. This message-passing scheme lets row and column representations exchange information through value nodes. As a result, rows that contain the same canonical value group can influence one another, columns receive signals from the values they generate, and in the multi-table case, shared value or foreign-key-linked structures allow information to propagate across tables.
After $L$ graph layers, the encoder outputs contextualized node representations $H_R^{(L)}$, $H_C^{(L)}$, and $H_V^{(L)}$. These representations are not pooled
into a single graph vector, instead, they are passed to the latent bridge, which compresses the variable-size graph into a fixed number of soft tokens for the
LLM.

\stitle{Latent Bridge to the LLM.}
In standard prompt- and prefix-tuning \cite{prefix}, the LLM is steered by a set of task-specific soft tokens that remain static at inference time: the same prompt vectors are prepended regardless of the specific input instance. In our TQA setting, however, the structural context changes dynamically based on both the input graph $\mathcal{G}$ and the user question $Q$. Therefore, we design a \textit{query-conditioned latent resampler} based on the style of a Perceiver Resampler to generate a sequence of soft tokens for every forward pass \cite{flamingo}.

Let $H_{\text{\sl graph}} \in \mathbb{R}^{N \times d}$ denote all final node embeddings output by the GNN, and $H_Q \in \mathbb{R}^{M \times d}$ denote the contextualized embeddings of the NL question (e.g., via a lightweight RoBERTa model). To bridge the modality gap without exhausting the LLM's context window, we initialize a fixed number of $K$ learnable query vectors $Z^{(0)} \in \mathbb{R}^{K \times d}$. 

To make the soft tokens question-conditioned, the resampler uses the question
embeddings $H_Q$ as additional context during latent extraction. The initial
learnable latent queries are partitioned by node type as
$Z^{(0)} = [Z_R^{(0)}; Z_C^{(0)}; Z_V^{(0)}]$, where
$Z_R^{(0)} \in \mathbb{R}^{K_R \times d}$,
$Z_C^{(0)} \in \mathbb{R}^{K_C \times d}$, and
$Z_V^{(0)} \in \mathbb{R}^{K_V \times d}$ correspond to row, column, and value latents, respectively. The question is encoded once to obtain $H_Q$ and is then reused by the latent groups. Each group performs cross-attention over its corresponding graph node representations concatenated with the same question embeddings along the sequence dimension:
$Z_R = \mathrm{Attn}(Z_R^{(0)}, [H_R; H_Q])$,
$Z_C = \mathrm{Attn}(Z_C^{(0)}, [H_C; H_Q])$, and finally
$Z_V = \mathrm{Attn}(Z_V^{(0)}, [H_V; H_Q])$.

The output is a sequence of $K$ soft tokens $Z \in \mathbb{R}^{K \times d}$, linearly projected to match the hidden dimension $d_L$ of the frozen LLM. Unlike static prefix tokens, these $K$ latents 
compress the multi-table graph into a summary conditioned on the question.

\stitle{LLM Interface and Answer Generation.}
The latent bridge outputs a fixed-length sequence of graph-derived soft tokens
$Z = \{z_1,\ldots,z_K\}$ in the graph encoder hidden space. Before being passed to the LLM, these vectors are mapped to the LLM embedding dimension through a
learned projector: $\hat{Z} = \mathrm{Proj}(Z)$.
The projected latents $\hat{Z}$ are then injected as soft prefix embeddings before the prompt.

The final LLM input consists of the graph latents, the task description, the
serialized textual table context, and the NL question: $[\hat{Z}; D; S_{\mathrm{table}}; Q]$, where $D$ denotes the instruction or dataset-specific description, $S_{\mathrm{table}}$ denotes the retained textual serialization of the table or table segments, and $Q$ is the question. Thus, the LLM receives both explicit
textual context and a compact 
structural prefix. 

\stitle{Training Objective.} Our training paradigm is designed to be highly parameter-efficient. Let $\Theta_{\text{\sl LLM}}$ denote the parameters of the large language model, and $\Phi_{\text{\sl GNN+Bridge}}$ denote the parameters of our graph encoder and query-conditioned resampler ($\sim$91M parameters). During training, we freeze $\Theta_{\text{\sl LLM}}$ and optimize only $\Phi_{\text{\sl GNN+Bridge}}$ to minimize the standard auto-regressive negative log-likelihood of the target answer $A$:
$
    \mathcal{L} = - \sum_{t=1}^{|A|} \log P_{\Theta_{\text{\sl LLM}}}\big(a_t \mid a_{<t}, Q, \hat{Z}, S_{\text{\sl table}}\big)
$ 
By keeping the gradients entirely within the lightweight $\Phi_{\text{\sl GNN+Bridge}}$ modules, the computational memory footprint is drastically reduced, allowing the framework to be trained end-to-end on single GPU while fully preserving the LLM's pre-trained general knowledge.

\section{Structural Analysis}
\label{sec:theory}

\stitle{Why Explicit Graph Construction Helps?}
A serialized table represents equality, co-occurrence, and joins only indirectly, as repeated strings scattered across a sequence. The  constructor turns these relations into graph structure.
Repeated categorical or textual values become shared value nodes; foreign-key-linked columns become shared column classes. As a result, duplicate elimination becomes a value-node degree property, and joins become  bounded-hop paths through shared value groups. The
constructor therefore makes common relational operations directly available to
the encoder.





\stitle{What Message Passing Can Extract.}
The encoder is a typed message-passing network over row, column-class, and value-group nodes.
Each layer expands the accessible neighborhood by one graph
hop, so an $L$-layer encoder extracts bounded-depth structural features of
$\cG$. In \method{}, these features include row membership, column membership,
repeated-value support, row-level co-occurrence, and local foreign-key
connectivity. The encoder is most useful when the bottleneck is structural
access, e.g., locating rows, forming groups, detecting repeated values, or following
joins. It is less useful when the relevant evidence has already been located
and the remaining difficulty is exact symbolic or numerical computation.


\stitle{Limits of the Latent Bridge.}
The latent bridge is a readout and compression mechanism, not an additional source of graph expressivity. It receives the node representations produced by
message passing and turns them into a fixed number of soft tokens.
It can select, weight, and summarize information exposed by the graph encoder, but it cannot recover distinctions that  were not represented in the constructed graph or were erased during message passing.

This fixed-capacity compression creates a bottleneck. A question-agnostic
bridge must compress the whole graph into the same sketch regardless of what is
being asked, so it may waste capacity on irrelevant details or discard facts
needed for a particular query. A question-conditioned bridge mitigates this by
using the question to decide which rows, columns, values, and joins should be
emphasized. Thus, it does not make the encoder more expressive, but it makes the
limited latent capacity more useful for the current question. Appendix~\ref{app:logical-bridge}
formalizes this intuition as a sketch-complexity separation.

\section{Experimental Setup}
\label{sec:expSetup}
\stitle{Datasets.}
We evaluate our method on a suite of benchmarks covering both single- and multi-table QA, as listed in Table \ref{tab_datasets}.

\begin{table}[t]
\centering
\small
\setlength{\tabcolsep}{3.5pt}
\begin{tabular}{c l c c l}
\toprule
 & \textbf{Dataset} & \textbf{Extra} & \textbf{Train} & \textbf{Main Challenge} \\
\midrule
\multirow{6}{*}{\rotatebox[origin=c]{90}{\textbf{Single-table}}}
& StructQA    & No     & 4.5k  & Structure sensitivity \\
& HiTab       & No     & 7.4k  & Hierarchical tables \\
& WTQ         & No     & 11.3k & Compositional QA \\
& WikiSQL     & No     & 56.4k & Filtering \& aggregation \\
& HCTQA       & Layout & 62.1k & Complex layouts \\
& TabMWP      & Text   & 23.1k & Math reasoning \\
\midrule
\multirow{7}{*}{\rotatebox[origin=c]{90}{\textbf{Multi-table}}}
& MultiHierTT & Text   & 7.0k  & Multi-hop reasoning \\
& SciTaT      & Text   & 11.6k & Scientific evidence \\
& MMQA        & Text   & 2.3k  & Cross-modal reasoning \\
& TQA-Bench   & No     & 9.8k  & Long-context joins \\
& Atis        & No     & 0.4k  & Flight-query reasoning \\
& GeoQuery    & No     & 0.5k  & Geographic querying \\
& Spider      & No     & 6.0k  & SQL result generation \\
\bottomrule
\end{tabular}
\caption{Datasets used in our experiments. The suite spans standard single-table QA, structure-heavy and layout-heavy settings, and multi-table or hybrid benchmarks requiring reasoning across tables and text.}
\label{tab_datasets}
\end{table}

For single-table experiments, we use six datasets. StructQA \cite{tamo} focuses on
table structure understanding and robustness to structural variation.
HiTab \cite{cheng-etal-2022-hitab} emphasizes hierarchical headers and aggregation-heavy reasoning.
WTQ \cite{wtq} and WikiSQL \cite{wikisql} provide standard flat-table benchmarks with broad use in
the literature, while HCTQA \cite{hctqa} focuses on human-centric tables with complex
layouts. TabMWP \cite{tabmwp} complements these datasets with table-grounded math word
problems that require numerical and multi-step reasoning.

For multi-table experiments, we use seven datasets from reasoning with relations to hybrid
QA over tables and text.
MultiHiertt \cite{zhao-etal-2022-multihiertt}, MMQA \cite{mmqa} and SCITAT \cite{zhang-etal-2025-scitat} combine tables with textual evidence, TQA-Bench \cite{qiu2024tqabenchevaluatingllmsmultitable} focuses on scalable multi-table relational reasoning under long contexts (8K), and Atis, GeoQuery and Spider \cite{pal-etal-2023-multitabqa} targets QA where the output itself may be tabular.

\stitle{Baselines}
We compare our method, \textbf{GRAB},  
against representative baselines. We include an \textbf{Inference-only} baseline, where the model receives only the serialized table(s) and question in zero-shot form, without any trainable table encoder or prompt parameters. 
For the \textit{Frozen LLM} setting, we compare against \textbf{Prompt Tuning}, where the base LLM remains fixed and only a small set of learned soft prompt vectors is optimized, and \textbf{TAMO}\footnote{TAMO is excluded from our multi-table experiments as the released implementation supports only single-table inputs.}, which encodes each table with a hypergraph neural network and injects the resulting latent table features into the frozen LLM as soft tokens. 
Finally, we consider a \textit{Tuned LLM} setting, where the LLM is adapted with LoRA with and without the GRAB encoder. As an additional reference, we report results for \textbf{GPT-5.4-mini} \cite{openai2026gpt54mini} under an inference-only setup. We also include two task-specialized baselines: \textbf{TableLlama} \cite{tablellama} for the single-table setting and \textbf{MultiTabQA} for the multi-table setting. These baselines are fine-tuned separately on each dataset, providing supervised references for the corresponding evaluation regimes.



\begin{table*}[t]
\centering
\small
\setlength{\tabcolsep}{6pt}
\begin{tabular}{llccccccc}
\toprule
\textbf{Setting} & \textbf{Method}
& \multicolumn{1}{c}{\textbf{StructQA}}
& \multicolumn{1}{c}{\textbf{HiTab}}
& \multicolumn{1}{c}{\textbf{WTQ}}
& \multicolumn{1}{c}{\textbf{WikiSQL}}
& \multicolumn{2}{c}{\textbf{HCTQA}}
& \multicolumn{1}{c}{\textbf{TabMWP}} \\
\cmidrule(lr){3-3}\cmidrule(lr){4-4}\cmidrule(lr){5-5}\cmidrule(lr){6-6}\cmidrule(lr){7-8}\cmidrule(lr){9-9}
 & 
 & \textbf{Acc.}
 & \textbf{Acc.}
 & \textbf{Acc.}
 & \textbf{Acc.}
 & \textbf{F1}
 & \textbf{CC}
 & \textbf{Acc.} \\
\toprule
\multirow{1}{*}{Inference-only}
& zero-shot & 24.26 & 53.72 & 32.29 & 53.86 & 30.16 & 14.06 & 34.75 \\
\midrule
\multirow{3}{*}{Frozen LLM}
& Prompt tuning & 71.33 & 69.63 & 56.02 & 85.71 & 68.88 & 43.09 & 84.75 \\
& TAMO & 73.40 & 65.27 & 56.95 & 86.29 & 70.89 & 47.70 & 84.49 \\
& \textbf{GRAB} & \textbf{84.80} & \textbf{74.49} & \textbf{58.33} & \textbf{88.30} & \textbf{86.38} & \textbf{72.81} & \textbf{87.01} \\
\midrule
Tuned LLM & zero-shot & 77.93 & 76.01 & 57.51 & 90.17 & 86.94 & 72.53 & \textbf{88.04} \\
(LoRA) & \textbf{GRAB} & \textbf{86.00} & \textbf{76.83} & \textbf{58.75} & \textbf{90.23} & \textbf{87.93} & \textbf{75.66} & 87.78 \\
\midrule
\multirow{2}{*}{Others}
& TableLLama & 72.00 & 63.82 & 48.59 & 87.37 & 68.34 & 43.49 & 80.05 \\
& GPT-5.4-mini & 48.46 & 72.28 & 65.88 & 72.99 & 57.13 & 36.60 & 72.06 \\
\bottomrule
\end{tabular}
\caption{Single-table question answering results with Qwen3-4B-Base as the base model. We report denotation accuracy on all datasets, except F1 and Complete Containment (CC) on HCTQA.}
\label{tab_single_table_main}
\end{table*}

\begin{table*}[t]
\centering
\small
\setlength{\tabcolsep}{3pt}
\resizebox{\textwidth}{!}{%
\begin{tabular}{llccccccccccc}
\toprule
\textbf{Setting} & \textbf{Method}
& \multicolumn{1}{c}{\textbf{MultiHiertt}}
& \multicolumn{2}{c}{\textbf{SCITAT}}
& \multicolumn{1}{c}{\textbf{MMQA}}
& \multicolumn{1}{c}{\textbf{TQA}}
& \multicolumn{2}{c}{\textbf{Atis}}
& \multicolumn{2}{c}{\textbf{GeoQuery}}
& \multicolumn{2}{c}{\textbf{Spider}} \\
\cmidrule(lr){3-3}
\cmidrule(lr){4-5}
\cmidrule(lr){6-6}
\cmidrule(lr){7-7}
\cmidrule(lr){8-9}
\cmidrule(lr){10-11}
\cmidrule(lr){12-13}
&
& \textbf{EM}
& \textbf{EM} & \textbf{F1}
& \textbf{List EM}
& \textbf{Acc.}
& \textbf{T-EM} & \textbf{Cell F1}
& \textbf{T-EM} & \textbf{Cell F1}
& \textbf{T-EM} & \textbf{Cell F1} \\
\toprule
\multirow{1}{*}{Inference-only}
& zero-shot & 7.18 & 6.18 & 18.67 & 3.34 & 38.19 & 1.16 & 16.89 & 4.74 & 12.72 & 2.92 & 39.27 \\
\midrule
\multirow{2}{*}{Frozen LLM}
& Prompt tuning & 16.38 & 23.24 & \textbf{41.49} & 25.11 & 84.14 & 70.93 & 64.09 & 43.87 & 36.28 & 34.35 & 71.58 \\
& \textbf{GRAB} & \textbf{21.55} & \textbf{26.18} & 41.38 & \textbf{31.10} & \textbf{91.29} & \textbf{79.07} & \textbf{64.78} & \textbf{51.78} & \textbf{43.12} & \textbf{36.86} & \textbf{75.11} \\
\midrule
Tuned LLM & zero-shot & 22.99 & 19.12 & 36.17 & 28.70 & \textbf{91.67} & 86.05 & 78.35 & \textbf{64.82} & \textbf{46.82} & 36.44 & 67.91 \\
(LoRA)& \textbf{GRAB} & 22.51 & \textbf{26.47} & \textbf{37.42} & \textbf{29.42} & 90.05 & \textbf{86.05} & \textbf{79.49} & 63.24 & 45.84 & \textbf{39.50} & \textbf{71.97} \\
\midrule
\multirow{2}{*}{Others}
& MultiTabQA & 6.51 & 5.00 & 34.81 & 27.27 & 81.19 & 61.63 & 34.00 & 54.15 & 41.19 & 27.68 & 62.35\\
& GPT-5.4-mini & 21.84 & 32.65 & 30.16 & 26.66 & 54.24 & 2.33 & 36.30 & 36.36 & 38.24 & 24.62 & 57.52 \\
\bottomrule
\end{tabular}%
}
\caption{Multi-table question answering results with Qwen3-4B-Base. We report denotation accuracy on MultiHiertt and MMQA, EM/F1 on SCITAT, accuracy on TQA-Bench, and Table Exact Match (T-EM) and Cell F1 on Atis, GeoQuery and Spider.}
\label{tab_multi_table_main}
\end{table*}

\stitle{Stress-test Taxonomy.}
Existing benchmarks conflate three orthogonal sources of difficulty: the form of the expected answer, the structural depth of table access required to locate evidence, and the computational path needed to derive the final value. Appendix~\ref{app_structural-hierarchy} formalizes this as a three-axis taxonomy (answer type, structural depth, and computational path), which lets us attribute failures to a specific axis rather than to aggregate ``hardness.'' To isolate the contributions of \method{}, we design a targeted stress-test over 15 relational tables that independently varies the \textbf{structural axis} (0--2 categorical filters, plus optional \textsc{group-by} keys) and the \textbf{computational axis} (\textsc{lookup}, \textsc{count}, \textsc{max}, \textsc{avg}). 

\stitle{Implementation Details.}
Our reference backbone is {Qwen3-4B-Base} \cite{qwen3}. For the graph encoder, we initialize textual row and column representations with embeddings from {Qwen3-Embedding-0.6B}. All experiments, including both inference and fine-tuning, are run on {NVIDIA A100} GPUs. The encoder--bridge component is on the order of {hundreds of millions of parameters}. Appendix \ref{app_implementation-details} reports the full training setup for GRAB and the baselines (optimizer, schedule, batch size, LoRA configuration, GNN hyperparameters) as well as dataset preprocessing details.

\stitle{Evaluation Metrics.}
We report the original metric from each benchmark. For most datasets it is accuracy: a prediction is correct only if, after normalization, its multiset of answer values exactly matches the gold answers. We parse both the prediction and the reference into a multiset of values, apply case-folding and numeric canonicalization (comma stripping and integer/float unification), and require multiset equality. The comparison is order-invariant but count-sensitive, partial overlap earns no credit. 
For HCTQA we report F1 together with Complete Containment (CC), a binary score that is 1 only when the prediction fully covers the gold answer set (recall = 1). As TQA-Bench answers are multiple-choice, we report accuracy by extracting the choice from the generated text before comparing it to the gold option. For SciTaT, we follow its protocol and split examples by answer length: short-form answers are scored by Exact Match, while free-form answers are scored by token-level F1. For the SQL-style multi-table benchmarks (ATIS, GeoQuery, Spider-SQL), the model generates the answer as a linearized result table, which we evaluate by table-level exact match (T-EM), 
i.e., full ordered table must match, and by cell-level F1, which scores cells as unordered multisets.  


\section{Main Results}
\label{sec:expResult}

Tables~\ref{tab_single_table_main} and~\ref{tab_multi_table_main} report results across single- and multi-table benchmarks. By comparing \method{} against serialization, graph, and scale baselines, four patterns stand out:
\textbf{(i) Massive gains on demanding structures.} \method{} consistently outperforms serialized baselines, with the largest jumps exactly where 1D text struggles most: complex layouts (HCTQA: $+17.5$ F1) and multi-table joins (Spider: $+13.3$ Cell F1; TQA: $+7.1$ Acc). This confirms that modeling tables relieves the LLM from implicitly reconstructing relational structure.
\textbf{(ii) Frozen \method{} rivals LoRA fine-tuning.} Despite training only ${\sim}91$M parameters while keeping the LLM frozen, \method{} approaches or beats a LoRA-adapted LLM reading serialized text (e.g., $84.80$ vs $77.93$ on StructQA). Injecting structural bias proves more efficient than updating LLM weights to brute-force structure from text. Combining both yields the strongest results overall.
\textbf{(iii) Query-conditioned graphs beat static hypergraphs.} \method{} outperforms TAMO on every single-table benchmark: dynamically allocating latent capacity based on the question is superior to query-agnostic compression.
\textbf{(iv) Punching above its weight class.} \method{} consistently outperforms fine-tuned, TQA-specialized models (TableLlama and MultiTabQA). Our 4B-parameter pipeline rivals or exceeds GPT-5.4-mini (e.g., $+29.2$ F1 on HCTQA, $+37.0$ Acc on TQA-Bench), a closed-source model roughly 100$\times$ larger, demonstrating that explicit relational encoding can bridge massive gaps in raw parametric scale.

\input{tables/tab_stress-summary}

\stitle{Performance by Query Type.} Table~\ref{tab_stress-summary} reports a condensed view of the diagnostic tests 
(full results in Appendix~\ref{app_stress-test}), revealing how \method{}'s topology mitigates structural bottlenecks while exposing inherent LLM arithmetic limits. 
On \textbf{locating evidence} (\textsc{lookup}), \method{} gains $+14$ to $+24$ F1 across all condition depths, as the graph explicitly links rows via shared value nodes to route condition-matching information. This structural advantage is most pronounced on \textbf{counting}: while a serialized baseline must reconstruct frequencies from sequence position, cardinality in our graph reduces to a local structural property (a value node's degree), yielding our largest absolute gains (up to $+46$ F1). Similarly, for \textbf{group-by} operations, \method{}'s explicit column-class nodes act as natural partition anchors, substantially recovering performance (e.g., Group-by \textsc{max}: $42.9 \to 72.6$ F1). Conversely, \textbf{exact arithmetic} (\textsc{avg}) defines the ceiling of our structural bridge. Because our graph maps continuous data to quantile buckets, it cannot execute exact math; it merely isolates the correct rows for the frozen LLM. Consequently, \textsc{avg} remains the hardest operator, with gains appearing only when filtering shrinks the operand set. These patterns are not an artifact of backbone size: rerunning the stress-test with a larger LLM (Qwen3-14B, Table~\ref{tab_stress_test_14b}) shows the \method{} gap widens rather than closes, the \textsc{avg} ceiling is preserved exactly, and the serialized baseline does not improve on Group-by despite the larger model: the bottleneck is structural, not parametric.

\stitle{Ablations.}
\textit{GNN Depth and Latent Count.} A sequential search over latent
count $K \in \{1, \dots, 256\}$ and GNN depth $L \in \{1, \dots, 16\}$ (Appendix~\ref{app_full-ablation}) shows that $K=32$ suffices and that
deeper encoders yield negligible improvement over $L=1$. 
This is consistent with the tripartite structure: row-node initialization encodes per-row cell content, value nodes act as sinks for rows sharing an attribute value, and column nodes aggregate their values, so a single message-passing step exposes co-occurrence and column membership to every value node. 


\noindent  \textit{Question Conditioning Variants.} Removing question conditioning
from the resampler costs $1$--$3$ points on three of four ablation benchmarks (Appendix~\ref{app_full-ablation}, Table \ref{tab_q_resampler_table_prompt_ablation}). The effect is modest but consistent in direction on multi-table benchmarks, supporting
the sketch-complexity argument (Appendix~\ref{app:logical-bridge}) 
that question conditioning helps most where the same graph must answer diverse queries. 

\noindent \textit{Serialized Table.} Removing the serialized table text from the prompt causes a collapse across datasets, e.g., 84.80 to 10.07 on StructQA (Appendix~\ref{app_full-ablation}, Table \ref{tab_q_resampler_table_prompt_ablation}), confirming that \textbf{GRAB} acts as a structural supplement rather than replacing the textual evidence needed by the LLM for exact values.


\noindent \textit{Additional Ablations.} Appendix~\ref{app_full-ablation} reports further checks on architectural choices and robustness. First, a linear projection head matches or outperforms deeper MLP bridges, indicating that the graph encoder and latent resampler already perform the relevant structural abstraction. Second, results across five random seeds show low variance, confirming that the gains are stable rather than driven by a favorable initialization.

\section{Conclusion}
We presented \method{}, a graph-relational latent bridge for multi-table QA with frozen LLMs. Beyond feeding relational evidence into a sequential prompt, \method{} constructs a typed graph over rows, column classes, and value groups, applies message passing to expose structural dependencies, and compresses the resulting representation into question-conditioned soft tokens. This design preserves the flexibility of autoregressive language models while adding an explicit inductive bias for joins, repeated values, and cross-row evidence aggregation.
Across single- and multi-table benchmarks, \method{} improves over serialization-only and soft-prompt baselines. The results support the view that tables should be treated as a structured modality. 
Our analysis shows that latent graph tokens act as structural guidance: they help the LLM locate relevant evidence, while arithmetic and fine-grained symbolic computation remain challenging.

Our findings suggest a broader direction for table-language modeling: pre-trained relational encoders that can be reused across datasets, analogous to the role of pre-trained encoders in vision-language models \cite{instructblip}. We see \method{} as a step toward such table foundation interfaces, where relational structure is encoded explicitly and integrated with LLM reasoning without sacrificing the general capabilities of the underlying LLM.


\section{Limitations}

While \method{} is parameter-efficient relative to LLM fine-tuning, it introduces additional preprocessing and graph-encoding overhead. This cost is modest in our setting, but it may become more significant for very large databases, highly connected schemas, or applications requiring low-latency inference. Future work should study scalable table retrieval jointly with graph construction, stronger pretraining for relational encoders, and tighter integration with symbolic tools for exact arithmetic and executable reasoning.

Because \textbf{GRAB} supplements rather than replaces the serialized text, the approach is still bounded by the context window limits of the underlying LLM (e.g., dropping or truncating tables that exceed 8K tokens).
Also, in the graph constructor,  
missing, noisy, or ambiguous schema information may weaken the structural graph and reduce the benefit of message passing. 

In TQA datasets, tests come with only the correct tables required to answer a question. In a more general setting, where databases are given as input, in the current implementation \method{} assumes that a relevant subset of tables has already been retrieved.   
To handle this issue, standard pipelines rely on retrieval modules to first extract a relevant subset of tables \cite{aixel,shen24}. 
A natural next step is to jointly learn retrieval, schema linking, and graph construction so that structural representations remain robust under incomplete or noisy database metadata.


\section{Use of AI assistants}
\label{sec:ai}
When writing this paper, we used ChatGPT to improve the flow of writing and the vocabulary of the initial drafts we manually wrote. Each suggestion has been manually validated by the authors. 


\section*{Acknowledgment}
This work was funded by the French government, through the 3IA Côte d’Azur Investments in the IA-cluster project managed by the National Research
Agency (ANR-23-IACL-0001). 
This project was provided with resources by GENCI at IDRIS, thanks to grants
2025-AD010616649 and 2025-AD010616180.

\bibliography{custom}

\newpage
\appendix

\section{Beyond Table Question Answering}
\label{app_factcheck}
\input{appendix/app_factcheck}

\section{Ablation}
\label{app_full-ablation}
\input{appendix/app_ablation}
\input{appendix/app_additional_exp}

\section{Constructor Details}
\label{app_additional-theory}
\input{appendix/app_theory}

%


\input{appendix/app_logical-bridge2}


\section{Experimental Setup Details}
\label{app_implementation-details}

\subsection{Dataset and Benchmark Details}
\input{appendix/app_dataset}
\label{app_datasets}
    

\subsection{Training Setup}
\input{appendix/app_training}


\section{Structural Hierarchy of TQA Queries}
\label{app_structural-hierarchy}
\input{appendix/app_question-hierarchy}

\input{appendix/app_stress-test}

\section{Architectural Comparison with Frozen-LLM Table Adapters}
\label{app_method-comparison}
Table~\ref{tab_method-comparison} summarizes the main architectural
differences between GRAB and the closest frozen-LLM table adaptation
baselines. The comparison highlights whether each method explicitly
models multi-table structure, conditions its latent representation on
the question, and incorporates foreign-key information.
\begin{table*}[t]
\centering
\small
\begin{tabular}{lccccc}
\toprule
\textbf{Method} & 
\textbf{Multi-table} & 
\textbf{Query-cond.} & 
\textbf{FK-aware} & 
\textbf{Frozen LLM} & 
\textbf{Serialization} \\
\midrule
Prompt tuning & 
\(\times\) & 
\(\times\) & 
\(\times\) & 
\(\checkmark\) & 
\(\checkmark\) \\

TAMO & 
\(\times\) & 
\(\times\) & 
\(\times\) & 
\(\checkmark\) & 
\(\checkmark\) \\

GRAB & 
\(\checkmark\) & 
\(\checkmark\) & 
\(\checkmark\) & 
\(\checkmark\) & 
\(\checkmark\) \\
\bottomrule
\end{tabular}
\caption{Comparison of GRAB with representative frozen-LLM table adaptation methods. GRAB differs by explicitly modeling multi-table relational structure, conditioning its latent structural tokens on the question, and incorporating foreign-key-aware graph construction.}
\label{tab_method-comparison}
\end{table*}
\end{document}

%% file: sec4-short.tex
As illustrated in Figure~\ref{fig:architecture}, instead of relying only on
$S_{\text{\sl table}}$, we define a \textit{graph constructor} $\gamma$ that
lifts the relational context into an explicit heterogeneous graph
$\cG=\gamma(\cT,\cF)$. In this graph, rows, column classes, and value groups are
represented as typed nodes, and foreign-key metadata induces shared column
classes. A graph encoder processes $\cG$ to capture row--column--value
dependencies and cross-table connectivity. To interface with the LLM, we define
a query-conditioned latent bridge that projects the encoded graph into a
fixed-length sequence of $K$ soft tokens, denoted $Z=\{z_1,\dots,z_K\}$. The LLM
then generates the answer from both the textual prompt and our structural prefix.
Throughout this process, the LLM weights remain frozen; only the graph encoder
and bridge are updated.

\stitle{Relational Graph Constructor.}
Intuitively, our constructor translates relational tables into a graph where rows, columns, and actual cell values are all treated as individual nodes. Edges are drawn simply based on inclusion: a row is connected to the values it contains, and a column is connected to the values it can hold. This naturally forces repeated values and foreign-key joins to become shared connection points (hubs) in the graph.

The \textit{graph constructor} $\gamma$ is a fixed, deterministic processing
map. It exposes row--column--value incidence and cross-table join structure
before any neural message passing occurs. We provide more details in~\cref{app_additional-theory}.
Let us consider the foreign-key
metadata in $\mathcal{M}$, that is, 
$
\cF =
\bigl\{\bigl((i,c),(j,d)\bigr) \mid
\text{column $c$ of $T_i$ joins column $d$ of $T_j$}\bigr\}$. The constructor maps the relational input to a tripartite graph $\gamma(\cT,\cF)=\bigl(\cG,\bH^{(0)}\bigr)$
with
$\cG =
\bigl(
\cV_R\cup\cV_C\cup\cV_V,
\cE_{RV}\cup\cE_{CV}
\bigr)$,
where $\cV_R$ are row nodes, $\cV_C$ are column-class nodes, $\cV_V$ are
value-group nodes, $\cE_{RV}$ are row--value incidence edges, and $\cE_{CV}$ are
column--value incidence edges. Furthermore,
$\bH^{(0)}$ denotes the initial row, column and value 
node features $H_R^{(0)}$, $H_C^{(0)}$, and $H_V^{(0)}$.

A \emph{distinguishing characteristic} of our construction is that columns linked by
foreign-key pairs in $\cF$ are represented by a single column-class node. This
embeds joins directly in the graph and makes related tables accessible to later
message passing. For example, in Figure~\ref{fig:architecture}, the city columns in $T_1$ and $T_2$ map to a single node in $\mathcal G$. Nodes and edges are otherwise defined in the natural way:
row--value edges record which value groups occur in each row, and column--value
edges record which value groups belong to each column class. For value nodes, we take values after applying a standardization map.
Here, categorical and textual values are normalized and canonicalized within
their column class, while numerical values are mapped to quantile buckets. 

For initial features, we use a fixed token-level text encoder and extend it to
strings by mean pooling. The graph hidden dimension is inherited from the
embedding model used to initialize the nodes. In our implementation, row and
column nodes are initialized from the same text embedding model, while value
nodes are constructed directly in the same hidden dimension. Hence, all node
types already lie in a common embedding space, and no additional projection is
applied at initialization.

Column nodes are initialized from the header embedding of the corresponding
column occurrence. When a column node represents multiple columns identified by
foreign-key links, we average their header embeddings. Value nodes are not
initialized from the raw cell-value text. Instead, we use a deterministic
embedding of the value-group identifier, augmented with column-type information,
constructed in the graph hidden dimension. Thus, value nodes primarily act as
structural anchors: they indicate where identical categorical/textual values or
discretized numeric buckets occur, while exact values remain available to the
LLM through $S_{\text{\sl table}}$.\looseness=-1

%% file: tables/tab_stress-summary.tex
\begin{table}[t]
\centering
\small
\setlength{\tabcolsep}{4pt}
\begin{tabular}{llccc}
\toprule
 & & Serialized & \textbf{GRAB} & $\Delta$F1 \\
\midrule
\multirow{3}{*}{\textsc{lookup}} & 1 cond.  & 66.59 & 80.72 & $+14.13$ \\
                                 & 2 cond.  & 65.56 & 89.32 & $+23.76$ \\
                                 & 3 cond.  & 73.90 & 95.52 & $+21.62$ \\
\midrule
\multirow{3}{*}{\textsc{count}}  & no cond. &  8.63 & 54.90 & $+46.27$ \\
                                 & 1 cond.  & 23.01 & 54.29 & $+31.28$ \\
                                 & 2 cond.  & 34.69 & 57.14 & $+22.45$ \\
\midrule
\multirow{3}{*}{\textsc{avg}}    & no cond. &  1.23 &  1.23 & $+0.00$  \\
                                 & 1 cond.  &  9.38 & 15.24 & $+5.86$  \\
                                 & 2 cond.  & 28.62 & 40.95 & $+12.33$ \\
\midrule
Group-by \textsc{max} & 1 key   & 42.92 & 72.59 & $+29.67$ \\
Group-by \textsc{avg} & 1 key   & 21.63 & 38.77 & $+17.14$ \\
\bottomrule
\end{tabular}
\caption{Stress-test results (F1) for representative categories. Full table in Appendix~\ref{app_stress-test}.}
\label{tab_stress-summary}
\end{table}

%% file: appendix/app_factcheck.tex
The main focus of our experiments is on table question answering, where the model must extract or reason over tabular content to produce a natural-language answer. We further hypothesize that the representations learned by GRAB are not specific to QA, but can transfer to structurally different tabular tasks. To test this hypothesis, we evaluate on two additional benchmarks: TabFact, a fact verification benchmark in which the model must classify a statement as entailed or refuted by a given table, and Spider, a text-to-SQL benchmark in which the model must generate a structured query from a natural-language question and a database schema. These tasks differ from QA both in output format and in the type of reasoning required, providing a broader test of the encoder's ability to capture table structure in a task-agnostic way. In Table~\ref{tab_grab_tabfact_spider}, we report accuracy for TabFact and execution accuracy for Spider. The results confirm that GRAB is able to extract task-agnostic information that can be used for tasks beyond question answering.

\input{tables/tab_TFC-T2SQL}

%% file: tables/tab_TFC-T2SQL.tex
\begin{table}[h]
\centering
\small
\setlength{\tabcolsep}{4pt}
\begin{tabular}{lccc}
\toprule
\textbf{Dataset} & \textbf{Base} & \begin{tabular}{c}\textbf{Soft}\\\textbf{Prompt}\end{tabular} & \textbf{GRAB} \\
\midrule
TabFact     & 67.86 & 81.15 & \textbf{84.25} \\
Spider & 23.78 & 68.84 & \textbf{72.46} \\
\bottomrule
\end{tabular}
\caption{Comparison of Base, Soft Prompt, and GRAB on TabFact and Spider with Qwen3-4B-Base}
\label{tab_grab_tabfact_spider}
\end{table}

%% file: appendix/app_ablation.tex



\subsection{GNN Architecture and Latent Bridge Ablations}
In the first stage, we fix the GNN at 2 layers and the resampler at 1 head and 8 layers, and sweep over latent count $K \in \{1, 2, 4, 8, 16, 32, 64, 128, 256\}$, with results reported in Table~\ref{tab_latent_sweep}. 

We find that $K{=}32$ and $K{=}64$ perform best while larger values bring no further gain, and very small values ($K \leq 2$) degrade  performance noticeably. Based on this, we carry forward $K \in \{4, 32, 64\}$ as representative small, medium, and large capacity settings.

\input{tables/tab_ablation_latent}

In the second stage, we fix the resampler and jointly sweep GNN depth $L \in \{1, 2, 4, 8, 16\}$ against the three selected latent counts, for a total of 15 configurations. Results are reported in Table~\ref{tab_gnn_latent_joint_ablation}.

\input{tables/tab_ablation_depth_latent}

We further ablate the design of the projection head that bridges the GNN encoder to the LLM input space. Specifically, we compare three variants: a single linear layer (\textbf{Linear}), a two-layer MLP with a non-linearity (\textbf{MLP}), and a deeper MLP with additional hidden layers (\textbf{Deep MLP}). All other components are kept fixed. 

As shown in  Table~\ref{tab_projection_head_ablation}, the linear projection consistently matches or outperforms the non-linear alternatives, suggesting that the resampler already provides sufficient expressivity and that additional capacity in the projection head does not yield further gains, and can even hurt performance, as seen on MMQA and MultiHierTT.

\input{tables/tab_ablation-bridge}

%% file: tables/tab_ablation_latent.tex
\begin{table*}[t]
\centering
\small
\begin{tabular}{lccccc}
\toprule
 & \multicolumn{4}{c}{\textbf{Dataset}} & \\
\cmidrule(lr){2-5}
\textbf{Latents ($K$)} & \textbf{StructQA} & \textbf{HiTab} & \textbf{MultiHierTT} & \textbf{MMQA} & \textbf{Avg} \\
\midrule
1   & 79.87 & 71.84 & 19.35 & 27.03 & 49.52 \\
2   & 79.73 & 71.78 & 19.83 & 26.79 & 49.53 \\
4   & 79.07 & \textbf{75.32} & 20.40 & 28.47 & 50.82 \\
8   & 80.33 & 72.03 & 20.59 & 27.27 & 50.06 \\
16  & 80.60 & 72.92 & 20.11 & 28.95 & 50.65 \\
32  & 80.73 & 73.23 & 18.87 & \textbf{31.34} & \textbf{51.04} \\
64  & 81.27 & 73.17 & \textbf{20.98} & 28.23 & 50.91 \\
128 & 79.53 & 74.12 & 19.92 & 27.99 & 50.39 \\
256 & \textbf{81.47} & 73.11 & 20.31 & 26.56 & 50.36 \\
\bottomrule
\end{tabular}
\caption{Stage 1 ablation: latent count sweep with GNN fixed at 2 layers and resampler fixed at 1 head and 8 layers. Bold denotes the best result per column.}
\label{tab_latent_sweep}
\end{table*}

%% file: tables/tab_ablation_depth_latent.tex
\begin{table}[h]
\centering
\small
\begin{tabular}{lccccc}
\toprule
 & \multicolumn{5}{c}{\textbf{GNN Layers}} \\
\cmidrule(lr){2-6}
\textbf{$K$} & \textbf{1} & \textbf{2} & \textbf{4} & \textbf{8} & \textbf{16} \\
\midrule
4   & 51.40 & 50.82 & 51.17 & 51.64 & 52.19 \\
32  & 52.43 & 51.04 & 51.79 & 52.22 & 52.61 \\
64  & 50.99 & 50.91 & 52.05 & 52.20 & \textbf{52.82} \\
\bottomrule
\end{tabular}
\caption{Joint ablation of GNN depth and latent count $K$ (average over StructQA, HiTab, MultiHierTT, and MMQA). The resampler is fixed at 1 head and 8 layers throughout. Bold denotes the best overall configuration.}
\label{tab_gnn_latent_joint_ablation}
\end{table}

%% file: tables/tab_ablation-bridge.tex
\begin{table}[h]
\centering
\small
\setlength{\tabcolsep}{4pt}
\begin{tabular}{lccc}
\toprule
\textbf{Dataset} &
\textbf{Linear} &
\textbf{MLP} &
\textbf{Deep MLP}\\
\midrule
StructQA & 84.80 & 85.00 & 80.27 \\
HiTab       & 74.49 & 74.05 & 73.55 \\
MultiHierTT & 21.55 & 19.35 & 19.54 \\
MMQA        & 31.10 & 25.84 & 23.68 \\
\bottomrule
\end{tabular}
\caption{Ablation comparison of projection heads across StructProbe, HiTab, MultiHierTT, and MMQA.}
\label{tab_projection_head_ablation}
\end{table}

%% file: appendix/app_additional_exp.tex
\subsection{Impact of Conditioning and Serialization}
To assess the contribution of key components of \method, we conduct two ablation studies. In the first, we remove the question conditioning from the resampler, meaning the GNN produces table embeddings independently of the input question rather than attending to it during the cross-attention aggregation. In the second, we remove the serialized table from the LLM prompt entirely, relying solely on the projected GNN embeddings to convey table content to the language model. These two ablations isolate respectively the role of question-aware table encoding and the role of the text-based table representation as a complementary signal to the graph embeddings. Results are reported in Table~\ref{tab_q_resampler_table_prompt_ablation}

\input{tables/tab_ablation-QC-Prompt}

\subsection{Robustness}
To verify that the reported results are not an artifact of a particular random initialization, we retrain the best-performing configuration with three different random seeds and report mean and standard deviation across runs. A small variance would confirm that the model converges reliably and that comparisons with baselines are meaningful beyond a single lucky initialization. Results reported in Table~\ref{tab_seed_results}.

\input{tables/tab_abl-seeds}

%% file: tables/tab_ablation-QC-Prompt.tex
\begin{table}[h]
\centering
\small
\setlength{\tabcolsep}{4pt}
\begin{tabular}{lccc}
\toprule
\textbf{Dataset} &
\begin{tabular}{c}\textbf{No Question}\\\textbf{Conditioning}\end{tabular} &
\begin{tabular}{c}\textbf{No Table}\\\textbf{in Prompt}\end{tabular} &
\begin{tabular}{c}\textbf{Full}\\\textbf{Model}\end{tabular} \\
\midrule
StructQA & 82.07 & 10.07 & 84.80 \\
HiTab       & 74.81 & 4.42  & 74.49 \\
MultiHierTT & 20.21 & 2.87  & 21.55 \\
MMQA        & 29.67 & 8.37  & 31.10 \\
\bottomrule
\end{tabular}
\caption{Ablation comparison across StructProbe, HiTab, MultiHierTT, and MMQA.}
\label{tab_q_resampler_table_prompt_ablation}
\end{table}

%% file: tables/tab_abl-seeds.tex
\begin{table}[h]
\centering
\small
\setlength{\tabcolsep}{3pt}
\begin{tabular}{lcccc}
\toprule
\textbf{Seed} & \textbf{StructQA} & \textbf{HiTab} & \textbf{MultiHierTT} & \textbf{MMQA} \\
\midrule
42 & 84.80 & 74.49 & 21.55 & 31.10 \\
43 & 87.33 & 72.98 & 21.55 & 27.99 \\
44 & 89.66 & 74.12 & 20.88 & 30.38 \\
45 & 87.73 & 74.24 & 21.17 & 27.99 \\
46 & 88.66 & 74.37 & 20.79 & 27.75 \\
\midrule
Mean & 87.64 & 74.04 & 21.19 & 29.04 \\
Std  & 1.82  & 0.62  & 0.34  & 1.56  \\
\bottomrule
\end{tabular}
\caption{Accuracy results across random seeds.}
\label{tab_seed_results}
\end{table}

%% file: appendix/app_theory.tex
For $n\in\bbN$, $n\neq 0$, we let
$[n]=\{1,2,\ldots,n\}$.

The main text already provided some details on the \method{} graph constructor. Here, we give the remaining details in~\cref{subsec:constructor}.
For completeness, in~\cref{subsec:tamo}, we include below the corresponding constructor for the TAMO/HyTrel-style baseline used in our comparisons.

\subsection{\method: Graph Constructor.}
\label{subsec:constructor}
The \textit{graph constructor} $\gamma$ is a fixed, deterministic processing map. It  exposes row--column--value incidence and cross-table join structure before any neural message passing occurs. Let $\cF$ denote the foreign-key metadata in $\mathcal{M}$:
$\cF =
\bigl\{\bigl((i,c),(j,d)\bigr) \mid
\text{column  $c$  of $T_i$ joins
column $d$ of  $T_j$}\bigr\}.$

The constructor maps the relational input to a tripartite graph
$\gamma(\cT,\cF) =
\bigl(\cG,\bH^{(0)}\bigr)$, with $
\cG =
\bigl(
\cV_R\cup\cV_C\cup\cV_V,
\cE_{RV}\cup\cE_{CV}
\bigr)$,
where $\cV_R$ are row nodes, $\cV_C$ are column-class nodes, $\cV_V$ are value-group nodes, $\cE_{RV}$ are row--value incidence edges, and $\cE_{CV}$ are column--value incidence edges. Furthermore,
$\bH^{(0)}=\{\bh^{(0)}_v\}_{v\in\cV_R\cup\cV_C\cup\cV_V}$ are the initial node features.  In more detail:

\textit{Column classes.}
Let
$\cC_{\mathrm{occ}}
=
\{(i,c) \mid i\in[n],\, c\in C_i\}$
be the set of column occurrences across all tables. The relation $\sim_{\cF}$ is the smallest equivalence relation on $\cC_{\mathrm{occ}}$ containing all foreign-key pairs in $\cF$. We write $[(i,c)]_{\cF}$ for the equivalence class of column occurrence $(i,c)$, and use $\alpha$ for a generic class. When $\cF=\emptyset$, the relation is the identity and no columns are merged. We denote the set of column classes by
$\cC_{\cF} = \cC_{\mathrm{occ}}/\!\sim_{\cF}$. For example, in Figure~\ref{fig:architecture}, the city columns in $T_1$ and $T_2$ map to a single node in $\mathcal G$.
Each class $\alpha$ inherits a single type $\tau_\alpha\in\{\mathrm{cat},\mathrm{num},\mathrm{text}\}$. 

\textit{Nodes.}
Row nodes are indexed globally across tables, column nodes correspond to FK-induced column classes, and value nodes correspond to canonical value groups within a column class:
$
\cV_R
=
\bigl\{\rho_{i,r} \mid i\in[n],\, r\in R_i\bigr\}$,
$\cV_C
=
\bigl\{c_\alpha \mid \alpha \in \cC_{\cF}\bigr\}$, and
$\cV_V
=
\bigl\{v_{\alpha,g}
\mid
\alpha \in \cC_{\cF},\,
g\in\mathrm{Im}(g_\alpha)
\bigr\}
$.
Here, for each column class $\alpha$, the grouping map $g_\alpha$ standardizes cell contents:
\[
g_\alpha(x)
=
\begin{cases}
\mathrm{id}_\alpha\bigl(\mathrm{norm}(x)\bigr),
& \tau_\alpha\in\{\mathrm{cat},\mathrm{text}\},\\[2pt]
q_\alpha(x),
& \tau_\alpha=\mathrm{num}.
\end{cases}
\]
where $\mathrm{norm}(\cdot)$ normalizes the cell string and $\mathrm{id}_\alpha(\cdot)$ assigns a unique identifier to each distinct normalized value within $\alpha$, while 
$q_\alpha(\cdot)$ maps values to quantile-based buckets fitted on all numeric values in the class. A value node $v_{\alpha,g}$ therefore represents a repeated categorical/textual value or a numeric range, not an individual cell occurrence.\footnote{Value groups are not replacements for exact numeric content, which remains available in the textual prompt.}

\textit{Edges.} 
For each cell $x^{(i)}_{r,c}$ (value in column $c$ in row $r$ in $R_i$),
let $\alpha=[(i,c)]_{\cF}$ and $g=g_\alpha(x^{(i)}_{r,c})$. The constructor adds the row--value edge
$\{\rho_{i,r},v_{\alpha,g}\}$ to $\cE_{RV}$ and the column--value edge
$\{c_\alpha,v_{\alpha,g}\}$ to $\cE_{CV}$. Thus, repeated values become shared graph neighborhoods, and foreign-key joins become explicit connectivity through shared column classes and value groups. 

\textit{Initial features.}
Let $\phi_{\mathrm{tok}}$ be a fixed token-level text encoder. For any string
$s\in\Sigma^*$, define 
$\phi(s)$ as $
\mathrm{MeanPool}\bigl(\phi_{\mathrm{tok}}(s)\bigr)$.

The graph's hidden feature dimension is inherited from the embedding model used to
initialize the nodes. In our implementation, row and column nodes are initialized
from the same text embedding model, while value nodes are constructed directly
in the same hidden dimension. Therefore, all node types already lie in a common  
embedding space and no additional projection is applied at initialization. We
write $\Vert$ for string concatenation.

For a row node $\rho_{i,r}$, we encode a row-level string formed by
concatenating header--value pairs:
\[
s_{i,r}
=
\big\Vert_{c\in C_i}
\bigl[\,h^{(i)}_c : x^{(i)}_{r,c}\,\bigr],
\qquad
\bh^{(0)}_{\rho_{i,r}}
=
\phi(s_{i,r}).
\]

For a column node $c_\alpha$, we initialize from the header embedding of the
corresponding column occurrence. If $\alpha$ contains multiple
FK-linked column occurrences, we average their header embeddings
$
\bh^{(0)}_{c_\alpha}
=
\frac{1}{|\alpha|}
\sum_{(i,c)\in\alpha}
\phi\bigl(h^{(i)}_c\bigr)$.

For a value node $v_{\alpha,g}$, we do not initialize from the cell-value text.
Instead, we use a deterministic embedding of the value-group identifier,
augmented with the column-type information, constructed in the graph
hidden dimension. Thus, value nodes primarily act as structural anchors: they
indicate where identical categorical/textual values or discretized numeric
buckets occur, while exact values are available to the LLM with
$S_{\text{\sl table}}$.
\newcommand{\TAMO}{\textbf{TAMO}}
\subsection{\TAMO: Graph Constructor}
\label{subsec:tamo}
We describe the TAMO/HyTrel structural encoder for a single flat table. Let
$T$ be a table with row set $R$, column set $C$, and cell values
$x_{r,c}$ in $V$, for $r\in R$ and $c\in C$. Each column carries a header
string $h_c$ and a declared type
$\tau_c\in\{\mathrm{cat},\mathrm{num},\mathrm{text}\}$.

TAMO/HyTrel constructs a hypergraph in which cell occurrences are primitive
nodes, while rows, columns, and the whole table are represented as hyperedges.
Equivalently, we represent this hypergraph by its typed incidence graph. The
constructor returns
$
\gamma_{\mathrm{HT}}(T)
=
\bigl(\cG_{\mathrm{HT}},\bH^{(0)}_{\mathrm{HT}}\bigr)$,
with 
\[
\cG_{\mathrm{HT}}
=
\bigl(
\cV_X\cup\cV_R\cup\cV_C\cup\cV_T,
\cE_{XR}\cup\cE_{XC}\cup\cE_{XT}
\bigr),
\]
where $\cV_X$ are cell-occurrence nodes, $\cV_R$ are row-hyperedge nodes,
$\cV_C$ are column-hyperedge nodes, $\cV_T$ contains the table-hyperedge node,
and $\cE_{XR}$, $\cE_{XC}$, and $\cE_{XT}$ encode incidences between cell nodes
and row, column, and table hyperedge nodes. The initial node features are
\[
\bH^{(0)}_{\mathrm{HT}}
=
\{\bh^{(0)}_v\}_{v\in\cV_X\cup\cV_R\cup\cV_C\cup\cV_T}.
\]

\paragraph{Cell, row-hyperedge, column-hyperedge, and table-hyperedge nodes.}
For every cell occurrence $x_{r,c}$ we introduce a cell node; for every row we
introduce a row-hyperedge node; for every column we introduce a
column-hyperedge node; and for the whole table we introduce a table-hyperedge
node:
\begin{align*}
\cV_X
&=
\bigl\{u_{r,c} \mid r\in R,\ c\in C\bigr\},\\
\cV_R
&=
\bigl\{\rho_r \mid r\in R\bigr\},\\
\cV_C
&=
\bigl\{\kappa_c \mid c\in C\bigr\},\\
\cV_T
&=
\bigl\{\theta_T\bigr\}.
\end{align*}
Here $u_{r,c}$ denotes the node corresponding to the cell occurrence whose
surface value is $x_{r,c}$. Thus, equal cell values in different positions still
give distinct cell nodes. The node $\theta_T$ represents the whole-table
hyperedge and provides a global aggregation channel.

\paragraph{Incidence edges.}
The incidence edges encode membership of cells in rows, columns, and the whole
table. For each cell occurrence $u_{r,c}$, we add
$(u_{r,c},\rho_r)$ to $\cE_{XR}$, $(u_{r,c},\kappa_c)$ to $\cE_{XC}$, and
$(u_{r,c},\theta_T)$ to $\cE_{XT}$:
\begin{align*}
\cE_{XR}
&=
\bigl\{
(u_{r,c},\rho_r)
\mid
r\in R,\ c\in C
\bigr\},\\
\cE_{XC}
&=
\bigl\{
(u_{r,c},\kappa_c)
\mid
r\in R,\ c\in C
\bigr\},\\
\cE_{XT}
&=
\bigl\{
(u_{r,c},\theta_T)
\mid
r\in R,\ c\in C
\bigr\}.
\end{align*}
Equivalently, each cell node belongs to exactly one row hyperedge, one column
hyperedge, and the table hyperedge.

\paragraph{Initial features.}
Let $\phi_{\mathrm{tok}}$ be a fixed token-level text encoder. For any string
$s\in\Sigma^*$, define its pooled representation by
\[
\phi(s)
=
\mathrm{MeanPool}\bigl(\phi_{\mathrm{tok}}(s)\bigr).
\]
For a cell node $u_{r,c}$, the cell-value text is used:
\[
s^X_{r,c}
=
x_{r,c},
\qquad
\bh^{(0)}_{u_{r,c}}
=
\phi(s^X_{r,c}).
\]
Thus, the value of the cell is encoded as the content of a cell-occurrence node.
For a column-hyperedge node $\kappa_c$, one initializes from the column header:
\[
\bh^{(0)}_{\kappa_c}
=
\phi(h_c).
\]
For a row-hyperedge node $\rho_r$, a learned or random initialization is used:
\[
\bh^{(0)}_{\rho_r}
=
\bu^R_r,
\]
where $\bu^R_r\in\mathbb{R}^{d_h}$ is an initialized row-hyperedge embedding.
For the table-hyperedge node $\theta_T$, one initializes from a table caption,
title, or identifier when available:
\[
\bh^{(0)}_{\theta_T}
=
\phi(s^T),
\]
where $s^T$ denotes the available textual description of the table. If no such
description is available, a learned or random table-hyperedge initialization can
be used instead:
\[
\bh^{(0)}_{\theta_T}
=
\bu^T,
\]
where $\bu^T\in\mathbb{R}^{d_h}$ is an initialized table-hyperedge embedding.

%% file: appendix/app_logical-bridge2.tex
\section{Formal View of the Latent Bridge}
\label{app:logical-bridge}

We next formalize the discussion in \cref{sec:theory} about the role and
limits of the latent bridge. As we will see, the bridge does not increase the
expressive power of the message-passing encoder. Rather, it provides a finite,
question-conditioned readout over the graph features already made available by
the constructor and encoder.

\newcommand{\GML}{\mathrm{GML}}
\newcommand{\QRead}{\mathrm{QRead}}
\newcommand{\cQ}{\mathcal Q}
\newcommand{\cA}{\mathcal A}

The relevant starting point is that the theoretical limits of message-passing
GNNs are well studied. Their expressive power is closely related to the
Weisfeiler--Leman hierarchy and to corresponding fragments of first-order logic
\cite{gilmer17,Mor+2019,Xu+2018b,barcelo20}. More recently, the
\textit{Grables} framework \cite{grables} extended this perspective to
tabular learning, showing that row-local methods fail on
``extension-sensitive'' queries: tasks whose answers depend on cross-row
structure, such as counting, overlaps, or joins. This motivates making the
graph construction explicit, as we do here, so that these relations are exposed
to message passing rather than left implicit in a serialized table.


\subsection{Logical View of the Constructed Graph}
We here use the connection between message-passing and graded modal logic \cite{barcelo20}.
Our graph constructor maps the relational input to the tripartite structure
\[
    \cG
    =
    \gamma(\cT,\cF)
    =
    \bigl(
    \cV_R\cup\cV_C\cup\cV_V,
    \cE_{RV}\cup\cE_{CV}
    \bigr),
\]
where row nodes are denoted by $\rho_{i,r}$ (with $i$ indicating the source table), column-class nodes by
$c_\alpha$, and value-group nodes by $v_{\alpha,g}$. We view $\cG$ as a finite
relational structure with unary predicates
\[
    \mathrm{Row}(x),\qquad
    \mathrm{Col}(x),\qquad
    \mathrm{Val}(x),
\]
type-specific refinements such as $\mathrm{Row}_i(x)$ (identifying rows belonging to table $i$) and $\mathrm{Val}_\alpha(x)$ (identifying values in column class $\alpha$), and
binary incidence relations $\cE_{RV}(r,v)$ and $\cE_{CV}(c,v)$.

Let $\GML_L$ denote graded modal logic of modal depth at most $L$ over this
tripartite vocabulary. Its characteristic modality is
\[
    \exists^{\geq N}y\,
    \bigl(\cE_\rho(y,x)\wedge \varphi(y)\bigr),
\]
where $\cE_\rho$ ranges over the typed incidence relations and their inverses.
This modality identifies nodes $x$ that have at least $N$ neighbors $y$
satisfying $\varphi$. Under standard expressivity assumptions on typed
multiset aggregation, an $L$-layer message-passing encoder can represent
depth-$L$ GML facts: one message-passing layer corresponds to one step of
graded neighborhood inspection. Thus, the graph encoder is best understood as
a local logical feature extractor over the tripartite table graph.

The constructor matters because it makes useful table relations local. For
example, repeated values in a column class are detected by the value-node
formula
\[
    \mathrm{Dup}_\alpha(v)
    :=
    \mathrm{Val}_\alpha(v)
    \wedge
    \exists^{\geq 2} r\,
    \bigl(
    \cE_{RV}(r,v)\wedge \mathrm{Row}(r)
    \bigr).
\]
Hence duplicate and equality information is no longer merely a coincidence
between cell strings in a serialization; it is a one-hop counting fact around
a value node. Similarly, if two foreign-key-linked columns are represented by
the same column class $\alpha$, then rows joined by that key are connected
through a shared value node. A bounded-hop join therefore becomes a
bounded-depth GML pattern in $\cG$.

\subsection{The Bridge as a Question-Conditioned Readout}

After message passing, the bridge receives three typed multisets of node
states: rows, columns, and values. The question-conditioned resampler selects
and compresses information from these states into $K$ latent tokens. A useful
logical abstraction is therefore a typed, question-conditioned readout over
GML-definable node properties.

For a formula $\varphi$ and a node type $X\in\{R,C,V\}$, write
\[
    \#_X\varphi(\cG)
    =
    \bigl|
    \{x\in\cV_X \mid \cG,x\models \varphi(x)\}
    \bigr|,
\]
for the number of nodes of type $X$ satisfying $\varphi$. The bridge can be
idealized as a finite sketch
\[
\begin{aligned}
    s_Q(\cG)=\eta_Q\bigl(&
        \#_R\varphi^R_{Q,1}(\cG),\ldots,
        \#_R\varphi^R_{Q,m_R}(\cG),\\
        &\#_C\varphi^C_{Q,1}(\cG),\ldots,
        \#_C\varphi^C_{Q,m_C}(\cG),\\
        &\#_V\varphi^V_{Q,1}(\cG),\ldots,
        \#_V\varphi^V_{Q,m_V}(\cG)
    \bigr),
\end{aligned}
\]
where the formulas are depth-$L$ GML formulas and the finite map $\eta_Q$ may
depend on the question. We denote this abstraction by
\[
    \QRead_B(\GML_L).
\]

This abstraction should be read conservatively. The bridge does not create new
message-passing information; it selects and compresses what the encoder has
already made available. If two graphs have the same typed multisets of
depth-$L$ GML node types, then any permutation-invariant bridge reading only
those encoded states receives the same graph-side information. Question
conditioning at the bridge can improve relevance and compression, but it
cannot recover distinctions erased by the constructor or encoder.

\subsection{Why Question Conditioning Helps}

The bridge has fixed capacity, so the relevant issue is not only which facts
are locally encodable, but also how many graph states the bridge must
separate. Let $\mathfrak G$ be a finite set of possible constructed
tripartite graphs. Each question $Q\in\cQ$ induces an answer map
\[
    a_Q:\mathfrak G\to \cA_Q
\]
and therefore a partition $\Pi_Q$ of $\mathfrak G$:
\[
    \cG\equiv_Q \cG'
    \quad\Longleftrightarrow\quad
    a_Q(\cG)=a_Q(\cG').
\]
Let
\[
    \Pi_{\cQ}
    =
    \bigwedge_{Q\in\cQ}\Pi_Q
\]
be the common refinement. Thus, two graphs are equivalent under $\Pi_{\cQ}$
exactly when they have the same answer to every question in $\cQ$.

We call a bridge \emph{exact} for a question family if its code is sufficient
to recover the correct answer for every graph in $\mathfrak G$ and every
question in the family. In the question-agnostic case, the bridge uses one
code map for all questions. In the question-conditioned case, the bridge may
use a different code map for each $Q$.

\begin{proposition}[Exact sketch complexity]
\label{prop:exact-sketch-complexity}
For a finite graph class $\mathfrak G$ and finite question family $\cQ$, the
minimum number of bits required by a question-agnostic exact bridge is
\[
    C_{\mathrm{agn}}(\cQ)
    =
    \left\lceil
    \log_2 |\Pi_{\cQ}|
    \right\rceil .
\]
The minimum number of bits required by a question-conditioned exact bridge is
\[
    C_{\mathrm{cond}}(\cQ)
    =
    \left\lceil
    \log_2
    \max_{Q\in\cQ}
    |\Pi_Q|
    \right\rceil .
\]
Consequently,
\[
    C_{\mathrm{cond}}(\cQ)
    \leq
    C_{\mathrm{agn}}(\cQ),
\]
and the inequality can be strict.
\end{proposition}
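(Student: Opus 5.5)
We first fix the model so the statement is well posed. A question-agnostic bridge is a code map $e:\mathfrak G\to\{0,1\}^b$ together with decoders $d_Q$ satisfying $d_Q(e(\cG))=a_Q(\cG)$ for all $Q\in\cQ$ and all $\cG\in\mathfrak G$. A question-conditioned bridge is a family $(e_Q,d_Q)_{Q\in\cQ}$ with $e_Q:\mathfrak G\to\{0,1\}^b$ and $d_Q(e_Q(\cG))=a_Q(\cG)$. In both cases $b$ is the number of bits. Each of the two equalities is then proved by matching a lower bound (a counting/pigeonhole argument) with an upper bound (an explicit encoding).

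\emph{Agnostic case.} For the lower bound, let $\cG,\cG'$ lie in different blocks of $\Pi_{\cQ}$. Then some $Q$ has $a_Q(\cG)\neq a_Q(\cG')$. If $e(\cG)=e(\cG')$, we would get $d_Q(e(\cG))=d_Q(e(\cG'))$, a contradiction. Hence $e$ is injective on the blocks of $\Pi_{\cQ}$, so $2^b\ge|\Pi_{\cQ}|$ and $b\ge\lceil\log_2|\Pi_{\cQ}|\rceil$. For the upper bound, enumerate the blocks of $\Pi_{\cQ}$ and let $e(\cG)$ be the index of the block containing $\cG$, written in $\lceil\log_2|\Pi_{\cQ}|\rceil$ bits. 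The decoder $d_Q$ is well defined because $a_Q$ is constant on each block of $\Pi_{\cQ}$, which refines $\Pi_Q$.

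\emph{Conditioned case.} Fix $Q$. The same argument applied to the single partition $\Pi_Q$ shows that $e_Q$ needs at least $\lceil\log_2|\Pi_Q|\rceil$ bits, with equality achieved by the block-index code. A single code length $b$ must serve every question, so the optimum is $\max_Q\lceil\log_2|\Pi_Q|\rceil$. Since $\lceil\log_2(\cdot)\rceil$ is monotone, this equals $\lceil\log_2\max_Q|\Pi_Q|\rceil$. If variable length per question were allowed, the worst case would still be this maximum, so I would state the fixed-length convention explicitly.

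\emph{Comparison.} $\Pi_{\cQ}$ refines every $\Pi_Q$, so $|\Pi_Q|\le|\Pi_{\cQ}|$ for each $Q$, which gives $C_{\mathrm{cond}}\le C_{\mathrm{agn}}$. For strictness I would use a toy example. Let $\mathfrak G$ consist of four graphs indexed by $(x,y)\in\{0,1\}^2$, for instance by the presence or absence of a duplicate value in each of two column classes, which is a $\mathrm{Dup}_\alpha$ feature. Take $\cQ=\{Q_1,Q_2\}$ with $a_{Q_1}=x$ and $a_{Q_2}=y$. Then $|\Pi_{Q_i}|=2$ while $|\Pi_{\cQ}|=4$, so $C_{\mathrm{cond}}=1<2=C_{\mathrm{agn}}$.

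\emph{Main obstacle.} The only delicate step is modelling rather than mathematics. The definition of an exact bridge must make the decoder depend on $Q$ but see the graph only through the code. It must also fix a common bit budget in the conditioned case. I would state these conventions before the argument, after which every step is routine.
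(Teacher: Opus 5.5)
Your proposal is correct and follows essentially the same route as the paper: exactness forces the code map's induced partition to refine $\Pi_{\mathcal Q}$ (respectively $\Pi_Q$ for each fixed $Q$), which gives the pigeonhole lower bound, and block-index encoding matches it; refinement then yields $C_{\mathrm{cond}}\le C_{\mathrm{agn}}$. Your strictness example is the $m=2$ instance of the paper's ``exponential separation'' construction with $m$ independent binary facts, and your explicit encoder/decoder model and the identity $\max_Q\lceil\log_2|\Pi_Q|\rceil=\lceil\log_2\max_Q|\Pi_Q|\rceil$ make precise conventions that the paper leaves implicit.
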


\begin{proof}
A question-agnostic sketch $s:\mathfrak G\to\{0,1\}^B$ induces a partition
$\Pi_s$ of graph states. If $s$ is exact for all $Q\in\cQ$, then two graphs
with the same sketch must have the same answer to every question. Hence
$\Pi_s$ refines $\Pi_{\cQ}$, so $s$ must use at least $|\Pi_{\cQ}|$ distinct
codes. Thus $2^B\geq |\Pi_{\cQ}|$. This gives the lower bound, and it is tight
by assigning one code to each block of $\Pi_{\cQ}$.

For the question-conditioned case, the same argument applies separately to
each $Q$. Exactness for $Q$ requires at least $|\Pi_Q|$ codes, and this is
tight by encoding the block of $\Pi_Q$. Since the same bit budget must work
for every question, the required number of bits is
\[
    \left\lceil
    \log_2
    \max_{Q\in\cQ}
    |\Pi_Q|
    \right\rceil .
\]
Finally, $\Pi_{\cQ}$ refines every $\Pi_Q$, so
$|\Pi_Q|\leq |\Pi_{\cQ}|$ for all $Q\in\cQ$.
\end{proof}

\paragraph{Exponential separation in code space.}
The gap can be exponential at the level of distinguishable graph classes. Let
there be $m$ independent binary facts
$b_1(\cG),\ldots,b_m(\cG)\in\{0,1\}$ about the constructed graph, and let
$\cQ=\{Q_1,\ldots,Q_m\}$ where
\[
    a_{Q_j}(\cG)=b_j(\cG).
\]
Assume every bit vector $b\in\{0,1\}^m$ is realized by some graph in
$\mathfrak G$. Then, for each fixed question $Q_j$, the partition
$\Pi_{Q_j}$ has two blocks, so
\[
    C_{\mathrm{cond}}(\cQ)=1.
\]
However, the joint answer vector
\[
    (a_{Q_1}(\cG),\ldots,a_{Q_m}(\cG))
    =
    (b_1(\cG),\ldots,b_m(\cG))
\]
can take all $2^m$ values. Hence
\[
    |\Pi_{\cQ}|=2^m,
    \qquad
    C_{\mathrm{agn}}(\cQ)=m.
\]
Thus, the question-conditioned bridge needs to distinguish only two answer
classes for the current question, while a question-agnostic bridge must
distinguish $2^m$ joint classes. In bits, this is a gap of $m$ versus $1$; in
codewords, it is exponential.

%% file: appendix/app_dataset.tex
Table~\ref{tab_dataset_stats} summarises the number of samples per split and the
fraction excluded by our token-budget filter. To keep all inputs within a context window of the LLM backbone, we discard any sample whose linearised table exceeds
\textbf{8{,}192 tokens}, as measured by the Qwen3-4B tokenizer. The resulting indices are stored in a skip file and applied consistently to training, validation, and test splits, as well as to graph pre-computation, ensuring that
excluded samples never appear in any evaluation. This is applied to all the baselines when training and testing. The vast majority of datasets are unaffected ($0\%$ skipped); the most impacted datasets are MMQA ($\approx13$--$14\%$ per split, owing to long concatenated table texts) and Spider ($\approx9\%$ train, $27\%$ test). Most datasets are well within the 8{,}192-token budget at the median,
confirming that filtering has a minor impact. The high maximum for WTQ (25{,}943 tokens) and HCTQA (7{,}705 tokens) reflects a small number of extremely wide
Wikipedia tables. For the dataset which did not provide a split, the full dataset has been split as in three parts. 

\begin{table*}[t]
\centering
\small
\setlength{\tabcolsep}{4pt}
\begin{tabular}{llcccccc}
\toprule
\textbf{Dataset} & \textbf{Task} &
\textbf{Train} & \textbf{Val} & \textbf{Test} &
\begin{tabular}{c}\textbf{Skip}\\\textbf{train}\end{tabular} &
\begin{tabular}{c}\textbf{Skip}\\\textbf{val}\end{tabular} &
\begin{tabular}{c}\textbf{Skip}\\\textbf{test}\end{tabular} \\
\midrule
StructProbe & Table QA          & 4{,}500  & 1{,}500  & 1{,}500   & 0.00  & 0.00  & 0.00  \\
HiTab       & Hier.\ table QA   & 7{,}417  & 1{,}671  & 1{,}584   & 0.00  & 0.00  & 0.00  \\
WTQ         & Table QA          & 11{,}321 & 2{,}831  & 4{,}344   & 1.25  & 0.64  & 1.27  \\
WikiSQL     & TableQA       & 56{,}355 & 8{,}421  & 15{,}878  & 0.27  & 0.20  & 0.40  \\
HCTQA       & Table QA          & 62{,}053 & 7{,}740  & 7{,}789   & 0.03  & 0.19  & 0.00  \\
TabMWP      & Math on tables    & 23{,}059 & 7{,}686  & 7{,}686   & 0.00  & 0.00  & 0.00  \\
TabFact     & Fact verif.       & 92{,}283 & 12{,}792 & 12{,}779  & 0.00  & 0.00  & 0.00  \\
MultiHierTT & Multi-table QA    & 7{,}047  & 783      & 1{,}044   & 0.00  & 0.00  & 0.00  \\
SciTaT      & Sci.\ table QA    & 11{,}573 & 1{,}282  & 953       & 0.00  & 0.00  & 0.00  \\
MMQA        & Multi-table QA    & 2{,}292  & 515      & 488       & 12.91 & 12.82 & 14.34 \\
TQA-Bench   & Multi-table QA    & 9{,}800  & 2{,}100  & 2{,}100   & 0.00  & 0.00  & 0.00  \\
Spider SQL  & Multi-table QA & 6{,}044  & 671      & 985$^{*}$ & 9.00  & 10.13 & 27.01 \\
ATIS        & Multi-table QA        & 384      & 45       & 86        & 0.00  & 0.00  & 0.00  \\
GeoQuery    & Multi-table QA      & 530      & 49       & 253       & 0.00 & 0.00 & 0.00 \\
\bottomrule
\end{tabular}
\caption{Dataset split sizes and percentage of samples removed by the 8{,}192-token table-length filter. Counts reflect the dataset size before filtering. *Since Spider has no test set available, the validation set has been used for testing only purposes.}
\label{tab_dataset_stats}
\end{table*}

\subsection{Per-Row and Column-Header Token Statistics}
\label{app_datasets:rowcol}

For graph-based encoder variants, each table row is tokenised independently in the format
\texttt{col\textsubscript{n}: val\textsubscript{n}},
and each column header is tokenised separately. Table~\ref{tab_rowcol_stats} reports the
maximum observed lengths using the Qwen3-Embedding-0.6B tokenizer across all splits, which determine the safe settings for the max limit for rows and columns length. 

\begin{table}[t]
\centering
\small
\setlength{\tabcolsep}{5pt}

\begin{tabular}{lcc}
\toprule
\textbf{Dataset} &
\begin{tabular}{c}\textbf{Row}\\\textbf{max}\end{tabular} &
\begin{tabular}{c}\textbf{Header}\\\textbf{max}\end{tabular} \\
\midrule
StructProbe & 959     & 20  \\
HiTab       & 2{,}206 & 214 \\
WTQ         & 994     & 40  \\
WikiSQL     & 507     & 175 \\
HCTQA       & 1{,}023 & 38  \\
TabMWP      & 97      & 11  \\
MultiHierTT & 691     & 117 \\
SciTaT      & 1{,}545 & 296 \\
TQA-Bench   & 337     & 7   \\
Spider SQL  & 342     & 9   \\
GeoQuery    & 52      & 4   \\
ATIS        & 122     & 5   \\
\bottomrule
\end{tabular}
\caption{Maximum row and column-header token lengths per dataset using the
Qwen3-Embedding-0.6B tokenizer across all splits.}
\label{tab_rowcol_stats}
\end{table}

%% file: appendix/app_training.tex
We use a unified training setup across all experiments to ensure that differences in performance are attributable to the model components rather than optimization choices. Unless otherwise stated, the LLM backbone is kept frozen and only the task-specific adaptation modules are trained. 

All models are optimized with AdamW ($\beta_1{=}0.9$, $\beta_2{=}0.95$, weight decay $0.05$) at a learning rate of $10^{-4}$,
  following a half-cycle cosine decay schedule with 1 epoch of linear warmup and a minimum learning rate of $5{\times}10^{-6}$.
  Gradient norms are clipped to $0.1$. Training runs for up to 10 epochs with early stopping (patience 3), using an effective batch size of 32, adapted on the number of GPUs.
 The effective training time depends on the number of GPUs used for training, which linearly reduces the time per epoch through data parallelism, but the per-step cost remains dominated by the frozen LLM's forward pass, which cannot be
  skipped since its intermediate activations are required for the gradient signal to flow back through the projector and table encoder. Consequently, total time scales with dataset size: a small dataset can complete in under two hours (StructQA), while a large one may require close to a full day (HCT-QA), even with the same hardware configuration.
  All experiments use seed 42.

    \paragraph{Base LLM and LoRA.}
  When LoRA is applied, we use rank $r{=}8$, $\alpha{=}16$, dropout $0.05$, targeting the query and value projection matrices (\texttt{q\_proj}, \texttt{v\_proj}), with no bias adaptation. In all GNN encoder runs the LLM is otherwise frozen, leaving only the table encoder and projector trainable.

  \paragraph{Soft prompt baseline.}
  The soft prompt model prepends 10 learnable virtual tokens to the LLM input with a dimension of 1024. 

  \paragraph{TableLlama.}
TableLlama~\cite{tablellama} is fine-tuned with LoRA using rank $r=16$, scaling factor $\alpha=32$, and dropout $0.05$. LoRA adapters are applied to all attention projection matrices:
\texttt{q\_proj}, \texttt{k\_proj}, \texttt{v\_proj}, and
\texttt{o\_proj}. The model is trained with learning rate
$2\times10^{-4}$, weight decay $0.01$, and a linear warmup over the first $6\%$ of training steps. LLaMA-2's absolute position embeddings are capped at 4096, so the effective input length is necessarily clamped to 4,096 tokens in all runs. Within this limit, only the table is truncated, while the instruction prefix and question are always preserved. 

\paragraph{MultiTabQA.}
MultiTabQA~\cite{pal-etal-2023-multitabqa} is implemented using the base model checkpoint fine-tuned with learning rate $10^{-4}$, weight decay $0.01$, and a
linear warmup over the first $6\%$ of training steps. For Atis, GeoQuery and Spider the checkpoints were already available. Its absolute
position embeddings are capped at $1{,}024$ tokens due to architectural constraints. As a result, the effective input
length is limited to $1{,}024$ tokens, regardless of the configured \texttt{max\_length}. This substantially limits the amount of table content that the model can process and makes it less suitable for large inputs.
  
  \paragraph{GNN encoder.}
  The graph encoder operates on a tripartite graph of row ($R$), column ($C$), and value ($V$) nodes. Node embeddings are
  precomputed offline using Qwen3-Embedding-0.6B and stored as \texttt{float16} tensors, together with question token
  embeddings, row/column validity masks, and adjacency matrices encoding $R$--$V$ and $C$--$V$ edges. The GNN applies 1 message-passing layer over this structure, producing 32 row latents, 32 column latents, and 32 value latents. A cross-attention resampler with 4 heads and 2 layers then pools these into a fixed-size representation per table. GNN dropout is $0.1$; A single linear projector maps the encoder output to the LLM embedding space, initialized with Xavier uniform (gain $0.01$).

%% file: appendix/app_question-hierarchy.tex
This appendix introduces a taxonomy of table question answering queries along three orthogonal axes: the form of the expected answer, the structural depth of table access required to locate evidence, and the computational path required to derive the final value. The taxonomy supports the stress-test in Appendix~\ref{app_stress-test}, which varies the structural and computational axes independently to attribute \method{}'s gains (and limitations) to a specific source. The answer-type axis is included for completeness and to justify why we hold it fixed in the stress-test, as we discuss below.

\subsection{Axis I: Answer Type}
\label{app_answer-type}

The first axis concerns the \emph{form} of the expected answer, independently of how it is computed. We distinguish three classes.

\paragraph{Boolean queries (A1).} The answer is a truth value (yes/no), e.g., \emph{``Is the value in column $c$ greater than $k$?''} or \emph{``Do any two rows share the same value in column $c$?''}

\paragraph{Retrieval queries (A2).} The answer is a value that exists verbatim in the table and is identified by locating the right position, e.g., \emph{``What is the value of column $c$ for the row where column $d$ equals $a$?''}

\paragraph{Derived queries (A3).} The answer is computed from the table and does not need to appear in it, e.g., \emph{``What is the average of column $c$ for rows where column $d$ equals $a$?''}

\paragraph{Why we do not vary this axis.} Answer type governs the evaluation protocol but not the reasoning required to produce a correct answer. A boolean question such as \emph{``Is the average salary of employees in London higher than the company-wide average?''} is A1 by output form but requires two aggregations and a comparison, making it as demanding as any A3 query. The only effect that is specific to A1 is that the binary output space inflates baseline accuracy through guessing, masking rather than revealing reasoning difficulty. We therefore fix the answer type at A2/A3 in the stress-test and vary the two axes that actually expose hardness: structural depth and computational path.

\subsection{Axis II: Structural Depth}
\label{app_structural-depth}

The second axis concerns how much relational structure must be accessed to locate the evidence required for the answer. We distinguish four levels, ordered by minimum structural access. Each question is assigned to the lowest sufficient level.

\paragraph{Scan-sufficient queries (S1).} The answer is determined by locating the right row and reading off a cell value; no aggregation across rows is needed. Example: \emph{``What is the value in column $c$ for the row where column $d$ equals $a$?''} A model reading the serialized table sequentially has access to all required information.

\paragraph{Single-column queries (S2).} The answer requires aggregating or filtering over the values of a single column, e.g., \emph{``How many rows have value $a$ in column $c$?''} or \emph{``What is the average of column $c$ for rows where column $d$ equals $a$?''} The flat serialization contains all necessary values but does not make cross-row statistics explicit.

\paragraph{Multi-column queries (S3).} The answer requires reasoning jointly over two or more columns within a single table, e.g., \emph{``Which value in column $c_1$ co-occurs most often with value $a$ in column $c_2$?''} or \emph{``Is there a pair of rows sharing values in both $c_1$ and $c_2$?''} These cannot be decomposed into independent single-column computations; the joint distribution across columns must be accessible to the model.

\paragraph{Multi-table queries (S4).} The answer requires traversing two or more tables via foreign-key relationships, e.g., \emph{``What is the total of column $c$ in $T_2$ for all rows linked to this row in $T_1$ via column $d$?''} The relevant evidence is distributed across tables and cannot be recovered from any single table in isolation.

\subsection{Axis III: Computational Path}
\label{app_computational-path}

The third axis concerns the arithmetic operations performed \emph{after} the relevant data has been located. This axis is orthogonal to both answer type and structural depth: a scan-sufficient boolean query may require multi-step arithmetic, while a multi-table retrieval query may require no computation beyond identification. Separating this axis is essential for diagnosis: a model that correctly identifies the relevant rows but produces a wrong numeric answer is failing on computation, not on structural reasoning, and the two failure modes call for different interventions.

\paragraph{Lookup (C0).} No arithmetic. The answer is read off once the relevant row or cell is located, e.g., \emph{``Which department does employee $x$ belong to?''} C0 is the cleanest probe of structural reasoning in isolation: any failure is attributable to incorrect row identification.

\paragraph{Counting (C1).} The answer is the number of rows satisfying a condition, e.g., \emph{``How many transactions were placed by customers over 30?''} Counting maps naturally to degree statistics on value nodes in \method{}'s graph but can be unreliable for LLMs on serialized text as the matching set grows.

\paragraph{Single aggregation (C2).} A single arithmetic operation over a set of values: sum, max, min, or mean, e.g., \emph{``What is the average age of employees in the engineering department?''} These operations require exact numeric values. Because \method{}'s graph encodes numeric values as quantile buckets, the graph alone cannot perform exact arithmetic; the exact values must be recovered from the textual serialization.

\paragraph{Multi-step derivation (C3).} Chained arithmetic where the output of one operation feeds the next, e.g., \emph{``By how much does the average order value of returning customers exceed that of new customers?''} C3 queries compound structural and arithmetic errors: a failure may stem from incorrect row selection, from arithmetic error at any step, or from both simultaneously.

\subsection{Using the Taxonomy}
\label{app_taxonomy-usage}

The stress-test in Appendix~\ref{app_stress-test} uses this taxonomy as an experimental scaffold. It holds the answer-type axis fixed (queries are A2 or A3) and varies the structural and computational axes independently, so that each empirical result can be attributed to a single source.

Three diagnostic recipes follow from the orthogonality of the axes:
\begin{itemize}
    \item \textbf{Fix S, vary C.} Holds the structural access pattern constant while increasing arithmetic demand. Failures isolate the arithmetic ceiling of the model and are not attributable to incorrect row identification.
    \item \textbf{Fix C, vary S.} Holds the computation constant (typically at C0, pure lookup) while increasing the structural depth required to find the right rows. Failures isolate structural reasoning capacity.
    \item \textbf{Match $(S, C)$, compare with/without \method{}.} At matched cells of the $(S, C)$ grid, the difference between the serialized baseline and \method{} isolates the contribution of the graph encoder.
\end{itemize}

A characteristic signature follows: if the graph encoder improves performance at C0 across S levels but not at C2--C3, the structural token is locating the right evidence, but arithmetic remains the binding constraint. Appendix~\ref{app_stress-test} reports exactly this.

%% file: appendix/app_stress-test.tex
\section{Fine-Grained Error Analysis and Stress-Test Design}
\label{app_stress-test}

Building on the taxonomy in Appendix~\ref{app_structural-hierarchy}, we now apply it as an experimental scaffold. The stress-test holds the answer-type axis fixed and varies the structural and computational axes jointly, generating a controlled grid of questions over which each failure mode can be attributed to a specific axis. We begin by stating two predictions about \method{}'s expected behavior on this grid, then describe the design and analyze the results.

\paragraph{Prediction 1: arithmetic ceiling at C2.}
\method{}'s graph constructor encodes numeric values as quantile buckets, not as exact numeric content. The graph therefore cannot perform exact arithmetic; it can only isolate the correct rows over which arithmetic must be applied by the LLM. We predict that pure aggregation queries (C2, especially \textsc{Avg}) will fail at low structural depth (S1, no filtering required), since there is no structural bottleneck for \method{} to relieve — only the arithmetic step remains, and that step is unchanged by the presence of the graph. The clearest case is the unconditioned \textsc{Avg} query: the model must sum a set of values and divide by their count, two operations that LLMs perform unreliably on serialized input regardless of how cleanly the input is presented. Conversely, when filtering reduces the value set, \method{} should help indirectly by shrinking the operand set the LLM must aggregate over.

\paragraph{Prediction 2: structural gain at S2--S3.}
Multiple simultaneous conditions on different columns (S3) require the model to identify rows satisfying joint constraints. On a serialized table, this demands matching patterns distributed across distant positions in the sequence; on \method{}'s graph, the conditions correspond to explicit incidence edges from a row node to typed value nodes. We predict that the graph encoder will help most on (S2--S3, C0--C1) cells: queries where locating the right rows is the dominant subtask and no exact arithmetic is required. 
Note that the flattening of hierarchical column headers in \method{}'s graph construction (where a nested header such as \emph{Export $>$ 2020 $>$ Q1} is expanded into separate columns) means that filtering on a deeply nested cell is structurally equivalent to applying multiple column conditions simultaneously, so both manifest as the same pattern in the graph.

\subsection{Stress-Test Set Design}
\label{app_stress-test-design}

We construct a controlled set of questions that varies the structural and computational axes independently, so that each cell of the $(S, C)$ grid can be attributed to a specific source of difficulty. A model that fails on \textsc{Avg} queries, for instance, may be failing because it cannot identify the correct rows (structural), because it cannot compute the mean over correctly identified rows (arithmetic), or both; varying one axis at a time disentangles these explanations.

\paragraph{Tables.}
The stress-test is conducted on 15 standard relational tables constructed in the style of HCT-QA. Concretely, the tables span the same domains as the real-world HCT-QA sources: scientific paper benchmarks, government statistics, and demographic census reports, and follow the same structural conventions as the HCT-QA synthetic generator: a flat relational layout where categorical attributes define row identity and numerical attributes fill the value columns. All tables contain at least three categorical columns and two numerical columns, a requirement imposed to support the increasing levels of structural difficulty examined below. Tables from the original HCT-QA pool were not used directly because too few contained the minimum number of categorical columns needed for the multi-condition experiments; the synthetic tables are otherwise in-distribution with the HCT-QA data the model was trained on.

\paragraph{Design.}
Questions are generated by independently varying two axes:

\begin{itemize}
  \item \textbf{Structural axis (S1--S3).} The number of simultaneous filter conditions applied before computing the answer: no conditions (global, over all rows; S1), one condition (single categorical filter; S2), or two conditions (joint filter on two distinct categorical columns; S3), optionally followed by a group-by operator partitioning rows by one or two categorical keys (G1 and G2 respectively).
  \item \textbf{Computational axis (C0--C2).} The aggregation operator applied to the retrieved values: \textsc{Lookup} (C0), \textsc{Count} (C1), \textsc{Max} (C2), and \textsc{Avg} (C2).
\end{itemize}

This design, extended with group-by variants, yields 2{,}337 questions across 15 tables and isolates each source of difficulty independently, allowing failures to be attributed to their origin rather than conflating structural and arithmetic errors.

\paragraph{Arithmetic bottleneck (verifies Prediction 1).}
Table~\ref{tab_stress_test} reveals a clear operator hierarchy. \textsc{Lookup} (C0) is the strongest category across all models and filter levels, with the serialized baseline achieving F1 of 66--74 regardless of the number of conditions. \textsc{Max} (C2) occupies a middle ground (F1\,$=$\,52--69), while \textsc{Avg} (C2) is the hardest operator by a large margin: both models score near zero on unconditioned average queries (F1\,$=$\,1.23 at (S1, C2)), where no filtering is required and arithmetic alone must be performed. This rules out a structural explanation for average failures and identifies exact arithmetic as a persistent limitation of LLMs on serialized tables, one that \textsc{+GRAB} cannot fully resolve either, exactly as predicted by the quantile-bucket encoding of value nodes.

\textsc{Count} (C1) shows a qualitatively different pattern: the serialized baseline performs poorly (F1\,$=$\,8.63 with no condition), while \textsc{+GRAB} achieves 54.90 --- the largest absolute gain in the entire table ($+$46.27 F1). This aligns with the natural capacity of message-passing GNNs to aggregate over neighborhoods, which maps directly onto counting over row-conditioned subgraphs.

\paragraph{Filtering makes questions easier, not harder.}
A consistent and counterintuitive pattern emerges across all operators: adding filter conditions \emph{reduces} difficulty rather than increasing it. For \textsc{Lookup}, performance improves monotonically from single (F1\,$=$\,66.59) to triple conditions (F1\,$=$\,73.90) on the serialized baseline. This is because more conditions uniquely identify the target row more precisely, reducing ambiguity in the answer. The LLM benefits from additional anchors in the text: each extra condition narrows the search space and makes the correct row easier to locate by pattern-matching over the serialized sequence.

The same effect holds for \textsc{Avg}: performance rises from F1\,$=$\,1.23 with no filter to 28.62 with two conditions, because filtering also reduces the number of values over which the mean must be computed, easing the arithmetic load. For \textsc{Count} and \textsc{Max}, the trend is less pronounced but consistent in direction. This finding has a practical implication: the stress-test categories without any filter condition are structurally the hardest, not the easiest, contrary to the intuition that more conditions imply more complexity.

\method{} amplifies this effect (verifying Prediction 2). The graph encoder provides column-typed value nodes that act as precise structural anchors, allowing the model to locate target cells even more reliably than pattern-matching over flat text. The gain is largest on \textsc{Lookup} ($+$14--24 F1 across S2--S3) and \textsc{Count} ($+$22--46 F1 across S1--S3), where locating the correct rows is the dominant subtask.

\paragraph{Group-by as the hardest structural probe.}
Group-by questions expose a qualitatively harder regime than any other category. Unlike lookup or aggregation, group-by requires the model to simultaneously partition the table by one or two categorical keys, apply an aggregation within each partition, and produce a complete structured answer with one tuple per group. This demands not only locating the relevant rows, but retaining and organizing them across multiple groups before producing the output: a form of working memory over the table that serialized text processing handles poorly.

The serialized baseline drops to F1\,$=$\,42.92 on Group-by \textsc{Max} single key and CC\,$=$\,0.00 on all double-key variants, meaning the model essentially never produces a fully correct multi-group answer. \method{} recovers substantially: F1\,$=$\,72.59 on Group-by \textsc{Max} single key ($+$29.67) with CC rising from 4.76 to 48.57, indicating that the graph encoder helps produce \emph{complete} structured outputs rather than only partial ones.

The counterintuitive filtering effect reappears here too. Adding a filter to a group-by query does not consistently increase difficulty and in some cases reduces it, presumably because the filter constrains the set of groups the model must track, reducing the working memory demand. Group-by \textsc{Avg} remains the hardest setting overall, since it compounds the group-tracking difficulty with exact arithmetic; \method{} reaches at most F1\,$=$\,56.59 here, consistent with the arithmetic ceiling identified in the scalar setting.

\input{tables/tab_stress-test-no-gpt}

\subsection{Taxonomy Verification}
\label{app_taxonomy-verification}

We close by mapping the results back onto the diagnostic recipes in Appendix~\ref{app_taxonomy-usage}, then verifying that the observed pattern is not an artifact of model scale.

\paragraph{Fixing S, varying C.} At S1 (no filtering), the operator hierarchy is C0\,$>$\,C1\,$>$\,C2 for both the serialized baseline and \textsc{+GRAB}. \textsc{Avg} at (S1, C2) is the unique cell where neither model exceeds F1\,$=$\,1.23, isolating exact arithmetic as the binding constraint when no structural reasoning is required. This is Prediction 1 confirmed.

\paragraph{Fixing C, varying S.} At C0, the serialized baseline improves with more filters (66.59 $\to$ 73.90 F1) and \textsc{+GRAB} improves further (80.72 $\to$ 95.52), confirming that structural localization is the binding constraint when no arithmetic is required. The gap between baseline and \textsc{+GRAB} widens at higher S, indicating that the graph's incidence structure provides increasing relative value as the number of joint conditions grows --- Prediction 2 confirmed.

\paragraph{Matched $(S, C)$, with vs.\ without \method{}.} \method{}'s gains concentrate at (S2--S3, C0--C1) cells, where the graph encoder relieves a real structural bottleneck without competing with arithmetic demand. Gains shrink at (S1, C2), where no structural bottleneck exists and only arithmetic remains. This is the characteristic signature predicted in Appendix~\ref{app_taxonomy-usage}: the structural token is locating the right evidence, and arithmetic remains the binding constraint where it appears.

\paragraph{Robustness to model scale.} We change the underlying LLM from Qwen3-4B to Qwen3-14B and rerun the stress-test on both the serialized baseline and \textsc{+GRAB} (Table~\ref{tab_stress_test_14b}). Three patterns confirm that the gains attributed to \method{} reflect a structural rather than a capacity bottleneck. First, the overall \textsc{+GRAB} gain \emph{widens} with scale, from $+21.42$ F1 at 4B to $+31.68$ F1 at 14B: a stronger backbone does not absorb the structural signal, it amplifies the benefit of receiving it in encoded form. Second, the arithmetic ceiling at (S1, C2) is preserved exactly --- $1.23$ F1 across all four 4B/14B $\times$ Serialized/\textsc{+GRAB} configurations --- exactly as predicted by the quantile-bucket encoding of value nodes. Third, the largest 14B gains concentrate precisely where the diagnostic recipes predict. On \textsc{Count}, \textsc{+GRAB} adds $+54.90$, $+51.43$, and $+26.98$ F1 across the three condition depths, since counting still maps onto value-node degree regardless of backbone size. On group-by, the contrast is sharper: the 14B serialized baseline actually \emph{regresses} relative to 4B on several Group-by \textsc{Count} settings (e.g., single key $34.58 \to 24.65$ F1, double key $37.44 \to 18.70$), consistent with the working-memory failure mode being orthogonal to model scale, while \textsc{+GRAB} recovers above $73$ F1 in every Group-by \textsc{Count} cell and lifts CC on Group-by \textsc{Max} single key by $+65.72$ points. Taken together, the predicted (S, C) signature reproduces at scale, and the graph encoder contributes along an axis of difficulty that additional parameters do not resolve.

\input{tables/tab_stress-qwen14b}

\section{Licenses}
\label{app_license}
We indicate the licenses of the artifacts used in this work, based on the official repositories, dataset cards, or release pages whenever available. All the artifacts used in this paper can be used for research: HiTab (Computational Use of Data Agreement v1.0), WikiTableQuestions (CC BY-SA 4.0), WikiSQL (BSD-3-Clause repository license), HCT-QA (MIT), TabMWP (CC BY-NC-SA 4.0), MultiHierTT (MIT), TQABench (GPL-3.0 license), GeoQuery (GPL-2.0), Spider (CC BY-SA 4.0), TabFact (CC BY 4.0), Qwen models (Apache 2.0), MultiTabQa (MIT), TableLLama (MIT).

%% file: tables/tab_stress-test-no-gpt.tex
\begin{table*}[t]
\centering
\caption{Stress-test results. F1 and CC scores averaged over questions across 15 different tables. +Prompt Tuning uses learned soft prompt vectors without a graph encoder; +GRAB adds the full graph encoder and query-conditioned latent bridge; Serialized only uses the frozen LLM with flattened table input. $\Delta$ reports the gain of +GRAB over Serialized only.}
\label{tab_stress_test}
\setlength{\tabcolsep}{4pt}
\renewcommand{\arraystretch}{1.05}
\small
\begin{tabular}{llcccccccc}
\toprule
& & \multicolumn{2}{c}{\textbf{Serialized only}} & \multicolumn{2}{c}{\textbf{+Prompt Tuning}} & \multicolumn{2}{c}{\textbf{+GRAB}} & \multicolumn{2}{c}{\textbf{$\Delta$}} \\
\cmidrule(lr){3-4} \cmidrule(lr){5-6} \cmidrule(lr){7-8} \cmidrule(lr){9-10}
\textbf{Question type} & \textbf{Cond.} & F1 & CC & F1 & CC & F1 & CC & $\Delta$F1 & $\Delta$CC \\
\midrule
Overall \small{(2337)}   & ---               & 39.29 & 22.59 & 46.39 & 21.69 & 60.71 & 39.50 & +21.42 & +16.91 \\
\midrule
\multicolumn{10}{l}{\textit{Lookup}} \\
\quad Single condition    & AND $\times$1     & 66.59 & 37.14 & 60.49 & 20.95 & 80.72 & 58.10 & +14.13 & +20.96 \\
\quad Double condition    & AND $\times$2     & 65.56 & 54.29 & 72.96 & 50.48 & 89.32 & 74.29 & +23.76 & +20.00 \\
\quad Triple condition    & AND $\times$3     & 73.90 & 87.62 & 82.63 & 86.67 & 95.52 & 92.38 & +21.62 & +4.76  \\
\midrule
\multicolumn{10}{l}{\textit{Count}} \\
\quad No condition        & ---               &  8.63 &  9.80 &  3.92 &  3.92 & 54.90 & 54.90 & +46.27 & +45.10 \\
\quad Single condition    & AND $\times$1     & 23.01 & 24.76 &  5.71 &  5.71 & 54.29 & 54.29 & +31.28 & +29.53 \\
\quad Double condition    & AND $\times$2     & 34.69 & 40.00 &  8.57 &  8.57 & 57.14 & 57.14 & +22.45 & +17.14 \\
\midrule
\multicolumn{10}{l}{\textit{Max}} \\
\quad No condition        & ---               & 57.67 & 62.96 & 56.79 & 56.79 & 93.83 & 93.83 & +36.16 & +30.87 \\
\quad Single condition    & AND $\times$1     & 52.21 & 56.19 & 63.81 & 63.81 & 85.71 & 85.71 & +33.50 & +29.52 \\
\quad Double condition    & AND $\times$2     & 68.76 & 75.24 & 68.57 & 68.57 & 79.05 & 79.05 & +10.29 & +3.81  \\
\midrule
\multicolumn{10}{l}{\textit{Avg}} \\
\quad No condition        & ---               &  1.23 &  1.23 &  1.23 &  1.23 &  1.23 &  1.23 &  +0.00 &  +0.00 \\
\quad Single condition    & AND $\times$1     &  9.38 & 14.29 &  8.57 &  8.57 & 15.24 & 15.24 &  +5.86 &  +0.95 \\
\quad Double condition    & AND $\times$2     & 28.62 & 34.29 & 22.86 & 22.86 & 40.95 & 40.95 & +12.33 &  +6.66 \\
\midrule
\multicolumn{10}{l}{\textit{Group-by Max}} \\
\quad Single key          & G1                & 42.92 &  4.76 & 66.48 & 24.76 & 72.59 & 48.57 & +29.67 & +43.81 \\
\quad Single key + filter & G1, AND $\times$1 & 43.69 &  6.67 & 71.73 & 40.00 & 70.20 & 37.14 & +26.51 & +30.47 \\
\quad Double key          & G2                & 38.28 &  0.00 & 52.96 &  1.90 & 60.00 & 17.14 & +21.72 & +17.14 \\
\quad Double key + filter & G2, AND $\times$1 & 38.58 &  0.00 & 63.89 &  3.81 & 55.88 & 12.38 & +17.30 & +12.38 \\
\midrule
\multicolumn{10}{l}{\textit{Group-by Count}} \\
\quad Single key          & G1                & 34.58 &  2.08 & 49.83 &  8.33 & 62.40 & 30.21 & +27.82 & +28.13 \\
\quad Single key + filter & G1, AND $\times$1 & 39.35 &  1.90 & 57.81 & 11.43 & 62.46 & 25.71 & +23.11 & +23.81 \\
\quad Double key          & G2                & 37.44 &  0.00 & 50.47 &  0.00 & 62.57 & 11.46 & +25.13 & +11.46 \\
\quad Double key + filter & G2, AND $\times$1 & 39.70 &  0.00 & 51.52 &  0.00 & 49.84 &  7.14 & +10.14 &  +7.14 \\
\midrule
\multicolumn{10}{l}{\textit{Group-by Avg}} \\
\quad Single key          & G1                & 21.63 &  1.90 & 28.95 &  1.90 & 38.77 &  6.67 & +17.14 &  +4.77 \\
\quad Single key + filter & G1, AND $\times$1 & 30.12 &  6.67 & 35.19 &  7.62 & 53.64 & 17.14 & +23.52 & +10.47 \\
\quad Double key          & G2                & 27.44 &  0.00 & 45.66 &  0.00 & 49.02 &  1.90 & +21.58 &  +1.90 \\
\quad Double key + filter & G2, AND $\times$1 & 38.46 &  0.95 & 56.79 &  0.95 & 56.59 & 14.29 & +18.13 & +13.34 \\
\bottomrule
\end{tabular}
\end{table*}

%% file: tables/tab_stress-qwen14b.tex
\begin{table*}[t]
\centering
\caption{Stress-test results across two backbones (Qwen3-4B and Qwen3-14B). F1 and CC scores averaged over questions across 15 different tables. Serialized only uses the frozen LLM with flattened table input; +GRAB adds the full graph encoder and query-conditioned latent bridge. $\Delta$F1 reports the F1 gain of +GRAB over Serialized only within each backbone.}
\label{tab_stress_test_14b}
\setlength{\tabcolsep}{4pt}
\renewcommand{\arraystretch}{1.05}
\small
\begin{tabular}{ll ccccc | ccccc}
\toprule
& & \multicolumn{5}{c|}{\textbf{Qwen3-4B}} & \multicolumn{5}{c}{\textbf{Qwen3-14B}} \\
\cmidrule(lr){3-7} \cmidrule(lr){8-12}
& & \multicolumn{2}{c}{Serialized} & \multicolumn{2}{c}{+GRAB} & 
& \multicolumn{2}{c}{Serialized} & \multicolumn{2}{c}{+GRAB} & \\
\cmidrule(lr){3-4} \cmidrule(lr){5-6} \cmidrule(lr){8-9} \cmidrule(lr){10-11}
\textbf{Question type} & \textbf{Cond.} & F1 & CC & F1 & CC & $\Delta$F1 & F1 & CC & F1 & CC & $\Delta$F1 \\
\midrule
Overall \small{(2337)} & --- & 39.29 & 22.59 & 60.71 & 39.50 & +21.42 & 40.65 & 27.04 & 72.33 & 52.20 & +31.68 \\
\midrule
\multicolumn{12}{l}{\textit{Lookup}} \\
\quad Single condition    & AND $\times$1 & 66.59 & 37.14 & 80.72 & 58.10 & +14.13 & 76.44 & 49.52 & 83.82 & 64.76 &  +7.38 \\
\quad Double condition    & AND $\times$2 & 65.56 & 54.29 & 89.32 & 74.29 & +23.76 & 77.74 & 64.76 & 90.61 & 78.10 & +12.87 \\
\quad Triple condition    & AND $\times$3 & 73.90 & 87.62 & 95.52 & 92.38 & +21.62 & 81.71 & 88.57 & 96.76 & 95.24 & +15.05 \\
\midrule
\multicolumn{12}{l}{\textit{Count}} \\
\quad No condition        & ---           &  8.63 &  9.80 & 54.90 & 54.90 & +46.27 & 21.57 & 21.57 & 76.47 & 76.47 & +54.90 \\
\quad Single condition    & AND $\times$1 & 23.01 & 24.76 & 54.29 & 54.29 & +31.28 & 27.62 & 27.62 & 79.05 & 79.05 & +51.43 \\
\quad Double condition    & AND $\times$2 & 34.69 & 40.00 & 57.14 & 57.14 & +22.45 & 56.83 & 57.14 & 83.81 & 83.81 & +26.98 \\
\midrule
\multicolumn{12}{l}{\textit{Max}} \\
\quad No condition        & ---           & 57.67 & 62.96 & 93.83 & 93.83 & +36.16 & 84.03 & 85.19 & 96.30 & 96.30 & +12.27 \\
\quad Single condition    & AND $\times$1 & 52.21 & 56.19 & 85.71 & 85.71 & +33.50 & 72.31 & 79.05 & 94.29 & 94.29 & +21.98 \\
\quad Double condition    & AND $\times$2 & 68.76 & 75.24 & 79.05 & 79.05 & +10.29 & 73.65 & 74.29 & 87.62 & 87.62 & +13.97 \\
\midrule
\multicolumn{12}{l}{\textit{Avg}} \\
\quad No condition        & ---           &  1.23 &  1.23 &  1.23 &  1.23 &  +0.00 &  1.23 &  1.23 &  1.23 &  1.23 &  +0.00 \\
\quad Single condition    & AND $\times$1 &  9.38 & 14.29 & 15.24 & 15.24 &  +5.86 &  9.71 & 10.48 & 17.14 & 17.14 &  +7.43 \\
\quad Double condition    & AND $\times$2 & 28.62 & 34.29 & 40.95 & 40.95 & +12.33 & 34.41 & 36.19 & 52.38 & 52.38 & +17.97 \\
\midrule
\multicolumn{12}{l}{\textit{Group-by Max}} \\
\quad Single key          & G1                & 42.92 &  4.76 & 72.59 & 48.57 & +29.67 & 36.49 &  8.57 & 86.59 & 74.29 & +50.10 \\
\quad Single key + filter & G1, AND $\times$1 & 43.69 &  6.67 & 70.20 & 37.14 & +26.51 & 41.81 & 11.43 & 86.15 & 62.86 & +44.34 \\
\quad Double key          & G2                & 38.28 &  0.00 & 60.00 & 17.14 & +21.72 & 39.68 &  2.86 & 72.67 & 19.05 & +32.99 \\
\quad Double key + filter & G2, AND $\times$1 & 38.58 &  0.00 & 55.88 & 12.38 & +17.30 & 37.52 &  0.95 & 80.32 & 39.05 & +42.80 \\
\midrule
\multicolumn{12}{l}{\textit{Group-by Count}} \\
\quad Single key          & G1                & 34.58 &  2.08 & 62.40 & 30.21 & +27.82 & 24.65 &  6.25 & 73.50 & 46.88 & +48.85 \\
\quad Single key + filter & G1, AND $\times$1 & 39.35 &  1.90 & 62.46 & 25.71 & +23.11 & 20.22 &  1.90 & 78.99 & 50.48 & +58.77 \\
\quad Double key          & G2                & 37.44 &  0.00 & 62.57 & 11.46 & +25.13 & 18.70 &  1.04 & 78.17 & 35.42 & +59.47 \\
\quad Double key + filter & G2, AND $\times$1 & 39.70 &  0.00 & 49.84 &  7.14 & +10.14 & 20.62 &  0.00 & 78.13 & 38.10 & +57.51 \\
\midrule
\multicolumn{12}{l}{\textit{Group-by Avg}} \\
\quad Single key          & G1                & 21.63 &  1.90 & 38.77 &  6.67 & +17.14 & 11.95 &  0.00 & 46.66 &  7.62 & +34.71 \\
\quad Single key + filter & G1, AND $\times$1 & 30.12 &  6.67 & 53.64 & 17.14 & +23.52 & 27.78 &  3.81 & 65.83 & 27.62 & +38.05 \\
\quad Double key          & G2                & 27.44 &  0.00 & 49.02 &  1.90 & +21.58 & 24.49 &  0.00 & 55.65 &  3.81 & +31.16 \\
\quad Double key + filter & G2, AND $\times$1 & 38.46 &  0.95 & 56.59 & 14.29 & +18.13 & 30.26 &  0.95 & 69.27 & 21.90 & +39.01 \\
\bottomrule
\end{tabular}
\end{table*}

%% file: main.bbl
\begin{thebibliography}{39}
\providecommand{\natexlab}[1]{#1}

\bibitem[{Ahmad et~al.(2026)Ahmad, Naeem, Aupetit, Elmagarmid, Eltabakh, Ma, Ouzzani, Ruan, and Al-Sayeh}]{hctqa}
Mohammad~S. Ahmad, Zan~A. Naeem, Michaël Aupetit, Ahmed Elmagarmid, Mohamed Eltabakh, Xiaosong Ma, Mourad Ouzzani, Chaoyi Ruan, and Hani Al-Sayeh. 2026.
\newblock \href {https://arxiv.org/abs/2504.20047} {Hct-qa: A benchmark for question answering on human-centric tables}.
\newblock \emph{Preprint}, arXiv:2504.20047.

\bibitem[{Alayrac et~al.(2022)Alayrac, Donahue, Luc, Miech, Barr, Hasson, Lenc, Mensch, Millicah, Reynolds, Ring, Rutherford, Cabi, Han, Gong, Samangooei, Monteiro, Menick, Borgeaud, Brock, Nematzadeh, Sharifzadeh, Binkowski, Barreira, Vinyals, Zisserman, and Simonyan}]{flamingo}
Jean-Baptiste Alayrac, Jeff Donahue, Pauline Luc, Antoine Miech, Iain Barr, Yana Hasson, Karel Lenc, Arthur Mensch, Katie Millicah, Malcolm Reynolds, Roman Ring, Eliza Rutherford, Serkan Cabi, Tengda Han, Zhitao Gong, Sina Samangooei, Marianne Monteiro, Jacob Menick, Sebastian Borgeaud, and 8 others. 2022.
\newblock \href {https://openreview.net/forum?id=EbMuimAbPbs} {Flamingo: a visual language model for few-shot learning}.
\newblock In \emph{Proceedings of the 36th International Conference on Neural Information Processing Systems}, NeurIPS '22, Red Hook, NY, USA. Curran Associates Inc.

\bibitem[{Badaro et~al.(2023)Badaro, Saeed, and Papotti}]{badaro23}
Gilbert Badaro, Mohammed Saeed, and Paolo Papotti. 2023.
\newblock \href {https://doi.org/10.1162/tacl_a_00544} {Transformers for tabular data representation: A survey of models and applications}.
\newblock \emph{Transactions of the Association for Computational Linguistics}, 11:227--249.

\bibitem[{Barceló et~al.(2020)Barceló, Kostylev, Monet, Pérez, Reutter, and Silva}]{barcelo20}
Pablo Barceló, Egor~V. Kostylev, Mikael Monet, Jorge Pérez, Juan Reutter, and Juan~Pablo Silva. 2020.
\newblock \href {https://openreview.net/forum?id=r1lZ7AEKvB} {The logical expressiveness of graph neural networks}.
\newblock In \emph{International Conference on Learning Representations}.

\bibitem[{Chang et~al.(2025)Chang, Hulsebos, Liu, Chen, and Sun}]{trl25}
Shuaichen Chang, Madelon Hulsebos, Qian Liu, Wenhu Chen, and Huan Sun, editors. 2025.
\newblock \href {https://doi.org/10.18653/v1/2025.trl-1.0} {\emph{Proceedings of the 4th Table Representation Learning Workshop}}. Association for Computational Linguistics, Vienna, Austria.

\bibitem[{Chen et~al.(2024)Chen, Zhang, and Roth}]{chen-etal-2024}
Peter~Baile Chen, Yi~Zhang, and Dan Roth. 2024.
\newblock \href {https://doi.org/10.18653/v1/2024.acl-long.148} {Is table retrieval a solved problem? exploring join-aware multi-table retrieval}.
\newblock In \emph{Proceedings of the 62nd Annual Meeting of the Association for Computational Linguistics (Volume 1: Long Papers)}, pages 2687--2699, Bangkok, Thailand. Association for Computational Linguistics.

\bibitem[{Chen and Guestrin(2016)}]{xgboost}
Tianqi Chen and Carlos Guestrin. 2016.
\newblock \href {https://doi.org/10.1145/2939672.2939785} {Xgboost: A scalable tree boosting system}.
\newblock In \emph{Proceedings of the 22nd ACM SIGKDD International Conference on Knowledge Discovery and Data Mining}, KDD '16, page 785–794, New York, NY, USA. Association for Computing Machinery.

\bibitem[{Cheng et~al.(2022)Cheng, Dong, Wang, Jia, Guo, Gao, Han, Lou, and Zhang}]{cheng-etal-2022-hitab}
Zhoujun Cheng, Haoyu Dong, Zhiruo Wang, Ran Jia, Jiaqi Guo, Yan Gao, Shi Han, Jian-Guang Lou, and Dongmei Zhang. 2022.
\newblock \href {https://doi.org/10.18653/v1/2022.acl-long.78} {{H}i{T}ab: A hierarchical table dataset for question answering and natural language generation}.
\newblock In \emph{Proceedings of the 60th Annual Meeting of the Association for Computational Linguistics (Volume 1: Long Papers)}, pages 1094--1110, Dublin, Ireland. Association for Computational Linguistics.

\bibitem[{Contalbo et~al.(2025)Contalbo, Pederzoli, Buono, Valeria, Guerra, and Paganelli}]{griqa}
Michele~Luca Contalbo, Sara Pederzoli, Francesco~Del Buono, Venturelli Valeria, Francesco Guerra, and Matteo Paganelli. 2025.
\newblock \href {https://doi.org/10.18653/v1/2025.findings-acl.814} {{GRI}-{QA}: a comprehensive benchmark for table question answering over environmental data}.
\newblock In \emph{Findings of the Association for Computational Linguistics: ACL 2025}, pages 15764--15779, Vienna, Austria. Association for Computational Linguistics.

\bibitem[{Cucumides and Geerts(2026)}]{grables}
Tamara Cucumides and Floris Geerts. 2026.
\newblock \href {https://arxiv.org/abs/2602.03945} {Grables: Tabular learning beyond independent rows}.
\newblock \emph{Preprint}, arXiv:2602.03945.

\bibitem[{Dai et~al.(2023)Dai, Li, Li, Tiong, Zhao, Wang, Li, Fung, and Hoi}]{instructblip}
Wenliang Dai, Junnan Li, Dongxu Li, Anthony Tiong, Junqi Zhao, Weisheng Wang, Boyang Li, Pascale Fung, and Steven Hoi. 2023.
\newblock \href {https://openreview.net/forum?id=vvoWPYqZJA} {Instruct{BLIP}: Towards general-purpose vision-language models with instruction tuning}.
\newblock In \emph{Thirty-seventh Conference on Neural Information Processing Systems}.

\bibitem[{Gilmer et~al.(2017)Gilmer, Schoenholz, Riley, Vinyals, and Dahl}]{gilmer17}
Justin Gilmer, Samuel~S. Schoenholz, Patrick~F. Riley, Oriol Vinyals, and George~E. Dahl. 2017.
\newblock \href {https://proceedings.mlr.press/v70/gilmer17a/gilmer17a.pdf} {Neural message passing for quantum chemistry}.
\newblock In \emph{Proceedings of the 34th International Conference on Machine Learning - Volume 70}, ICML'17, page 1263–1272. JMLR.org.

\bibitem[{Herzig et~al.(2020)Herzig, Nowak, M{\"u}ller, Piccinno, and Eisenschlos}]{tapas}
Jonathan Herzig, Pawel~Krzysztof Nowak, Thomas M{\"u}ller, Francesco Piccinno, and Julian Eisenschlos. 2020.
\newblock \href {https://doi.org/10.18653/v1/2020.acl-main.398} {{T}a{P}as: Weakly supervised table parsing via pre-training}.
\newblock In \emph{Proceedings of the 58th Annual Meeting of the Association for Computational Linguistics}, pages 4320--4333, Online. Association for Computational Linguistics.

\bibitem[{Hollmann et~al.(2025)Hollmann, M{\"{u}}ller, Purucker, Krishnakumar, K{\"{o}}rfer, Hoo, Schirrmeister, and Hutter}]{tabpfn}
Noah Hollmann, Samuel M{\"{u}}ller, Lennart Purucker, Arjun Krishnakumar, Max K{\"{o}}rfer, Shi~Bin Hoo, Robin~Tibor Schirrmeister, and Frank Hutter. 2025.
\newblock \href {https://doi.org/10.1038/S41586-024-08328-6} {Accurate predictions on small data with a tabular foundation model}.
\newblock \emph{Nat.}, 637(8044):319--326.

\bibitem[{Hu et~al.(2022)Hu, Shen, Wallis, Allen{-}Zhu, Li, Wang, Wang, and Chen}]{HuSWALWWC22}
Edward~J. Hu, Yelong Shen, Phillip Wallis, Zeyuan Allen{-}Zhu, Yuanzhi Li, Shean Wang, Lu~Wang, and Weizhu Chen. 2022.
\newblock \href {https://openreview.net/forum?id=nZeVKeeFYf9} {Lora: Low-rank adaptation of large language models}.
\newblock In \emph{The Tenth International Conference on Learning Representations, {ICLR} 2022, Virtual Event, April 25-29, 2022}. OpenReview.net.

\bibitem[{Huang et~al.(2024)Huang, Liu, Lin, Pang, Du, and Lin}]{huang2024lorahub}
Chengsong Huang, Qian Liu, Bill~Yuchen Lin, Tianyu Pang, Chao Du, and Min Lin. 2024.
\newblock \href {https://openreview.net/forum?id=TrloAXEJ2B} {Lorahub: Efficient cross-task generalization via dynamic lo{RA} composition}.
\newblock In \emph{First Conference on Language Modeling}.

\bibitem[{Li et~al.(2023)Li, Li, Savarese, and Hoi}]{blip2}
Junnan Li, Dongxu Li, Silvio Savarese, and Steven Hoi. 2023.
\newblock \href {https://proceedings.mlr.press/v202/li23q.html} {Blip-2: bootstrapping language-image pre-training with frozen image encoders and large language models}.
\newblock In \emph{Proceedings of the 40th International Conference on Machine Learning}, ICML'23. JMLR.org.

\bibitem[{Li et~al.(2025)Li, Ye, Ye, Sun, Jiang, Wang, Tian, Zhang, WANG, Fu, Chen, and Zhao}]{tamo}
Liyao Li, Chao Ye, Wentao Ye, Yifei Sun, Zhe Jiang, Haobo Wang, Jiaming Tian, Yiming Zhang, NINGTAO WANG, Xing Fu, Gang Chen, and Junbo Zhao. 2025.
\newblock \href {https://openreview.net/forum?id=kurEZdWU9G} {Table as a modality for large language models}.
\newblock In \emph{The Thirty-ninth Annual Conference on Neural Information Processing Systems}.

\bibitem[{Li and Liang(2021)}]{prefix}
Xiang~Lisa Li and Percy Liang. 2021.
\newblock \href {https://doi.org/10.18653/v1/2021.acl-long.353} {Prefix-tuning: Optimizing continuous prompts for generation}.
\newblock In \emph{Proceedings of the 59th Annual Meeting of the Association for Computational Linguistics and the 11th International Joint Conference on Natural Language Processing (Volume 1: Long Papers)}, pages 4582--4597, Online. Association for Computational Linguistics.

\bibitem[{Liu et~al.(2022{\natexlab{a}})Liu, Chen, Guo, Ziyadi, Lin, Chen, and Lou}]{tapex}
Qian Liu, Bei Chen, Jiaqi Guo, Morteza Ziyadi, Zeqi Lin, Weizhu Chen, and Jian-Guang Lou. 2022{\natexlab{a}}.
\newblock \href {https://openreview.net/forum?id=O50443AsCP} {{TAPEX}: Table pre-training via learning a neural {SQL} executor}.
\newblock In \emph{International Conference on Learning Representations}.

\bibitem[{Liu et~al.(2022{\natexlab{b}})Liu, Ji, Fu, Tam, Du, Yang, and Tang}]{liu22}
Xiao Liu, Kaixuan Ji, Yicheng Fu, Weng Tam, Zhengxiao Du, Zhilin Yang, and Jie Tang. 2022{\natexlab{b}}.
\newblock \href {https://doi.org/10.18653/v1/2022.acl-short.8} {{P}-tuning: Prompt tuning can be comparable to fine-tuning across scales and tasks}.
\newblock In \emph{Proceedings of the 60th Annual Meeting of the Association for Computational Linguistics (Volume 2: Short Papers)}, pages 61--68, Dublin, Ireland. Association for Computational Linguistics.

\bibitem[{Lu et~al.(2023)Lu, Qiu, Chang, Wu, Zhu, Rajpurohit, Clark, and Kalyan}]{tabmwp}
Pan Lu, Liang Qiu, Kai-Wei Chang, Ying~Nian Wu, Song-Chun Zhu, Tanmay Rajpurohit, Peter Clark, and Ashwin Kalyan. 2023.
\newblock \href {https://openreview.net/forum?id=DHyHRBwJUTN} {Dynamic prompt learning via policy gradient for semi-structured mathematical reasoning}.
\newblock In \emph{The Eleventh International Conference on Learning Representations}.

\bibitem[{Morris et~al.(2019)Morris, Ritzert, Fey, Hamilton, Lenssen, Rattan, and Grohe}]{Mor+2019}
Christopher Morris, Martin Ritzert, Matthias Fey, William~L. Hamilton, Jan~Eric Lenssen, Gaurav Rattan, and Martin Grohe. 2019.
\newblock \href {https://doi.org/10.1609/aaai.v33i01.33014602} {Weisfeiler and {L}eman go neural: Higher-order graph neural networks}.
\newblock In \emph{{AAAI}}.

\bibitem[{{OpenAI}(2026)}]{openai2026gpt54mini}
{OpenAI}. 2026.
\newblock Introducing gpt-5.4 mini and nano.
\newblock \url{https://openai.com/index/introducing-gpt-5-4-mini-and-nano/}.
\newblock Accessed: 2026-05-26.

\bibitem[{Pal et~al.(2023)Pal, Yates, Kanoulas, and de~Rijke}]{pal-etal-2023-multitabqa}
Vaishali Pal, Andrew Yates, Evangelos Kanoulas, and Maarten de~Rijke. 2023.
\newblock \href {https://doi.org/10.18653/v1/2023.acl-long.348} {{M}ulti{T}ab{QA}: Generating tabular answers for multi-table question answering}.
\newblock In \emph{Proceedings of the 61st Annual Meeting of the Association for Computational Linguistics (Volume 1: Long Papers)}, pages 6322--6334, Toronto, Canada. Association for Computational Linguistics.

\bibitem[{Pasupat and Liang(2015)}]{wtq}
Panupong Pasupat and Percy Liang. 2015.
\newblock \href {https://aclanthology.org/P15-1142/} {Compositional semantic parsing on semi-structured tables}.
\newblock In \emph{Proceedings of the 53rd Annual Meeting of the Association for Computational Linguistics and the 7th International Joint Conference on Natural Language Processing (Volume 1: Long Papers)}, pages 1470--1480.

\bibitem[{Qiu et~al.(2024)Qiu, Peng, He, Yuan, and Wang}]{qiu2024tqabenchevaluatingllmsmultitable}
Zipeng Qiu, You Peng, Guangxin He, Binhang Yuan, and Chen Wang. 2024.
\newblock \href {https://arxiv.org/abs/2411.19504} {Tqa-bench: Evaluating llms for multi-table question answering with scalable context and symbolic extension}.
\newblock \emph{Preprint}, arXiv:2411.19504.

\bibitem[{Robinson et~al.(2024)Robinson, Ranjan, Hu, Huang, Han, Dobles, Fey, Lenssen, Yuan, Zhang, He, and Leskovec}]{relbench}
Joshua Robinson, Rishabh Ranjan, Weihua Hu, Kexin Huang, Jiaqi Han, Alejandro Dobles, Matthias Fey, Jan~Eric Lenssen, Yiwen Yuan, Zecheng Zhang, Xinwei He, and Jure Leskovec. 2024.
\newblock \href {https://openreview.net/forum?id=WEFxOm3Aez} {Relbench: A benchmark for deep learning on relational databases}.
\newblock In \emph{The Thirty-eight Conference on Neural Information Processing Systems Datasets and Benchmarks Track}.

\bibitem[{Shen et~al.(2024)Shen, Vougiouklis, Diao, Vyas, Ji, and Pan}]{shen24}
Zhili Shen, Pavlos Vougiouklis, Chenxin Diao, Kaustubh Vyas, Yuanyi Ji, and Jeff~Z. Pan. 2024.
\newblock \href {https://doi.org/10.18653/v1/2024.emnlp-main.449} {Improving retrieval-augmented text-to-{SQL} with {AST}-based ranking and schema pruning}.
\newblock In \emph{Proceedings of the 2024 Conference on Empirical Methods in Natural Language Processing}, pages 7865--7879, Miami, Florida, USA. Association for Computational Linguistics.

\bibitem[{Wu et~al.(2025)Wu, Yang, Li, Ji, Okumura, and Zhang}]{mmqa}
Jian Wu, Linyi Yang, Dongyuan Li, Yuliang Ji, Manabu Okumura, and Yue Zhang. 2025.
\newblock \href {https://proceedings.iclr.cc/paper_files/paper/2025/file/794a425a2e47e05d29d30f79b79a692d-Paper-Conference.pdf} {Mmqa: Evaluating llms with multi-table multi-hop complex questions}.
\newblock In \emph{International Conference on Learning Representations}, volume 2025, pages 48626--48643.

\bibitem[{Xie et~al.(2022)Xie, Wu, Shi, Zhong, Scholak, Yasunaga, Wu, Zhong, Yin, Wang, Zhong, Wang, Li, Boyle, Ni, Yao, Radev, Xiong, Kong, Zhang, Smith, Zettlemoyer, and Yu}]{unifiedskg}
Tianbao Xie, Chen~Henry Wu, Peng Shi, Ruiqi Zhong, Torsten Scholak, Michihiro Yasunaga, Chien-Sheng Wu, Ming Zhong, Pengcheng Yin, Sida~I. Wang, Victor Zhong, Bailin Wang, Chengzu Li, Connor Boyle, Ansong Ni, Ziyu Yao, Dragomir Radev, Caiming Xiong, Lingpeng Kong, and 4 others. 2022.
\newblock \href {https://doi.org/10.18653/v1/2022.emnlp-main.39} {{U}nified{SKG}: Unifying and multi-tasking structured knowledge grounding with text-to-text language models}.
\newblock In \emph{Proceedings of the 2022 Conference on Empirical Methods in Natural Language Processing}, pages 602--631, Abu Dhabi, United Arab Emirates. Association for Computational Linguistics.

\bibitem[{Xu et~al.(2019)Xu, Hu, Leskovec, and Jegelka}]{Xu+2018b}
Keyulu Xu, Weihua Hu, Jure Leskovec, and Stefanie Jegelka. 2019.
\newblock \href {https://openreview.net/forum?id=ryGs6iA5Km} {How powerful are graph neural networks?}
\newblock In \emph{International Conference on Learning Representations}.

\bibitem[{Yang et~al.(2025)Yang, Li, Yang, Zhang, Hui, Zheng, Yu, Gao, Huang, Lv, Zheng, Liu, Zhou, Huang, Hu, Ge, Wei, Lin, Tang, Yang, Tu, Zhang, Yang, Yang, Zhou, Zhou, Lin, Dang, Bao, Yang, Yu, Deng, Li, Xue, Li, Zhang, Wang, Zhu, Men, Gao, Liu, Luo, Li, Tang, Yin, Ren, Wang, Zhang, Ren, Fan, Su, Zhang, Zhang, Wan, Liu, Wang, Cui, Zhang, Zhou, and Qiu}]{qwen3}
An~Yang, Anfeng Li, Baosong Yang, Beichen Zhang, Binyuan Hui, Bo~Zheng, Bowen Yu, Chang Gao, Chengen Huang, Chenxu Lv, Chujie Zheng, Dayiheng Liu, Fan Zhou, Fei Huang, Feng Hu, Hao Ge, Haoran Wei, Huan Lin, Jialong Tang, and 41 others. 2025.
\newblock \href {https://arxiv.org/abs/2505.09388} {Qwen3 technical report}.
\newblock \emph{arXiv preprint arXiv:2505.09388}.

\bibitem[{Yang et~al.(2022)Yang, Gupta, Upadhyay, He, Goel, and Paul}]{tableformer}
Jingfeng Yang, Aditya Gupta, Shyam Upadhyay, Luheng He, Rahul Goel, and Shachi Paul. 2022.
\newblock \href {https://doi.org/10.18653/v1/2022.acl-long.40} {{T}able{F}ormer: Robust transformer modeling for table-text encoding}.
\newblock In \emph{Proceedings of the 60th Annual Meeting of the Association for Computational Linguistics (Volume 1: Long Papers)}, pages 528--537, Dublin, Ireland. Association for Computational Linguistics.

\bibitem[{Zhang et~al.(2025{\natexlab{a}})Zhang, Zhang, Yang, Chen, and Luo}]{aixel}
Chi Zhang, Meihui Zhang, Yuxin Yang, Tao Chen, and Zhaojing Luo. 2025{\natexlab{a}}.
\newblock \href {https://doi.org/10.1145/3769831} {Aixelask: A stepwise-guided retrieval and reasoning framework for large table qa}.
\newblock \emph{Proc. ACM Manag. Data}, 3(6).

\bibitem[{Zhang et~al.(2024)Zhang, Yue, Li, and Sun}]{tablellama}
Tianshu Zhang, Xiang Yue, Yifei Li, and Huan Sun. 2024.
\newblock \href {https://doi.org/10.18653/v1/2024.naacl-long.335} {{T}able{L}lama: Towards open large generalist models for tables}.
\newblock In \emph{Proceedings of the 2024 Conference of the North American Chapter of the Association for Computational Linguistics: Human Language Technologies (Volume 1: Long Papers)}, pages 6024--6044, Mexico City, Mexico. Association for Computational Linguistics.

\bibitem[{Zhang et~al.(2025{\natexlab{b}})Zhang, Wang, Wang, Dou, Lu, Xu, Wu, and Zhu}]{zhang-etal-2025-scitat}
Xuanliang Zhang, Dingzirui Wang, Baoxin Wang, Longxu Dou, Xinyuan Lu, Keyan Xu, Dayong Wu, and Qingfu Zhu. 2025{\natexlab{b}}.
\newblock \href {https://doi.org/10.18653/v1/2025.findings-acl.199} {{SCITAT}: A question answering benchmark for scientific tables and text covering diverse reasoning types}.
\newblock In \emph{Findings of the Association for Computational Linguistics: ACL 2025}, pages 3859--3881, Vienna, Austria. Association for Computational Linguistics.

\bibitem[{Zhao et~al.(2022)Zhao, Li, Li, and Zhang}]{zhao-etal-2022-multihiertt}
Yilun Zhao, Yunxiang Li, Chenying Li, and Rui Zhang. 2022.
\newblock \href {https://aclanthology.org/2022.acl-long.454} {{M}ulti{H}iertt: Numerical reasoning over multi hierarchical tabular and textual data}.
\newblock In \emph{Proceedings of the 60th Annual Meeting of the Association for Computational Linguistics (Volume 1: Long Papers)}, pages 6588--6600, Dublin, Ireland. Association for Computational Linguistics.

\bibitem[{Zhong et~al.(2017)Zhong, Xiong, and Socher}]{wikisql}
Victor Zhong, Caiming Xiong, and Richard Socher. 2017.
\newblock \href {https://arxiv.org/abs/1709.00103} {Seq2sql: Generating structured queries from natural language using reinforcement learning}.
\newblock \emph{Preprint}, arXiv:1709.00103.

\end{thebibliography}
